\documentclass[11pt]{article}
\usepackage[sort&compress,square,comma,authoryear]{natbib}
\usepackage[a4paper, total={6.1in, 10in}]{geometry}
\usepackage{enumitem}
\usepackage[utf8]{inputenc} 
\usepackage[T1]{fontenc}    
\usepackage{hyperref}       
\usepackage{url}            
\usepackage{booktabs}  
\usepackage{titletoc}
\usepackage{tabularx}
\usepackage{float}
\usepackage{multirow}
\newcolumntype{Y}{>{\centering\arraybackslash}X}
\newcolumntype{Z}{>{\raggedleft\arraybackslash}X}
\usepackage[parfill]{parskip}
\usepackage{amsfonts}       
\usepackage{algorithm}
\usepackage{algpseudocode}
\usepackage{amsmath}
\usepackage{nicefrac}       
\usepackage{microtype}      
\usepackage[dvipsnames]{xcolor}         
\usepackage{amsmath,amssymb,amsthm}
\usepackage{bbold}
\usepackage{colortbl}
\usepackage{graphicx} 
\usepackage{mathrsfs,mathtools}
\usepackage{subfigure}
\usepackage{thm-restate}
\usepackage{wrapfig}
\usepackage{caption}
\usepackage[capitalize]{cleveref}
\usepackage{tikz}
\usetikzlibrary{calc, fadings, decorations, shapes, positioning, arrows}
\usepackage[graphicx]{realboxes}
\usepackage{pifont}
\usepackage{tcolorbox}

\newcommand{\best}[1]{\textbf{#1}}
\newcommand{\rhat}{\hat{r}}
\newcommand{\X}{\mathcal{X}}

\definecolor{salmon}{RGB}{234,153,153}
\definecolor{cornflowerblue}{RGB}{100,149,237}
\hypersetup{
    colorlinks,
    linkcolor={cornflowerblue},
    citecolor={cornflowerblue},
    urlcolor={salmon}
}

\definecolor{darkgreen}{rgb}{0.0, 0.5, 0.0}
\usepackage[textsize=tiny]{todonotes}
\usepackage{tabularx}
\usepackage{wrapfig}
\definecolor{darkblue}{rgb}{0, 0, 0.5}
\hypersetup{colorlinks=true, citecolor=darkblue, linkcolor=darkblue, urlcolor=darkblue}
\usepackage{tikz}

\theoremstyle{plain}
\newtheorem{theorem}{Theorem}[section]
\newtheorem{proposition}[theorem]{Proposition}
\newtheorem{lemma}[theorem]{Lemma}

\theoremstyle{definition}

\theoremstyle{remark}
\newtheorem{remark}[theorem]{Remark}

\newcommand{\midsepremove}{\aboverulesep = 0mm \belowrulesep = 0mm}
\midsepremove


\title{Learning to Orchestrate Agents under Uncertainty}

\date{}

\usepackage{authblk}

\author[1]{Mary Chriselda Antony Oliver\thanks{Correspondence to: \texttt{mca52@cam.ac.uk}. \textit{${}^{\dagger}$indicates joint supervision.}}}
\author[2]{Lan Jiang}
\author[3]{Aaron Bundi Anampiu}
\author[2]{Elaf Almahmoud}
\author[4]{Francesco Quinzan${}^{\dagger}$}
\author[2]{Umang Bhatt${}^{\dagger}$}

\affil[1]{Department of Applied Mathematics and Theoretical Physics,  University of Cambridge}
\affil[2]{Centre for Human-Inspired Artificial Intelligence, University of Cambridge}
\affil[3]{African Institute for Mathematical Sciences, South Africa}
\affil[4]{Department of Engineering Science, University of Oxford}

\begin{document}

\maketitle

\begin{abstract}
\noindent
Adaptive orchestration of heterogeneous agents requires making sequential delegation decisions under uncertain and evolving agent behaviour, e.g., coordinating specialised AI models with varying reliability, cost, and response quality. While prior work on agent orchestration focuses on performance or cost, uncertainty in agent reliability and output distributions is typically not modelled explicitly at the orchestration level. In this work, we study the problem of adaptive orchestration of heterogeneous agents under uncertainty, where a meta-controller must decide when to delegate to an agent, accounting for reliability, cost, and uncertainty. We propose BOT-Orch, a lightweight framework that recasts orchestration as a bandit problem over agents, regularized by OT distances between agent output distributions and task-specific reference distributions. We show that the regularised orchestration enjoys $\mathcal{O}(\sqrt{T})$ regret under standard assumptions, and provably induces preference ordering among agents with identical mean rewards but differing distributional alignment. Empirically, we demonstrate that BOT-Orch outperforms standard bandit and heuristic baselines in synthetic but adversarial task allocation settings with heterogeneous, non-i.i.d. agent behaviour.

\end{abstract}

\section{Introduction}

Incorporating uncertainty in delegated decision-making process is a fundamental challenge across machine learning and autonomous systems~\citep{WurmanDAndreaMountz2007,OlfatiSaberFaxMurray2007,shalev2012online}. In real-world environments, where stochasticity is dominant (e.g., partial and noisy observations), variability in agent capabilities makes centralized control impractical and can break classical solutions. Classical frameworks (e.g., Dec-POMDPs, stochastic games, DCOPs) explicitly model uncertainty in decision-making but often assume known system models, have limited scalability, or simplify agent interactions, restricting applicability in complex coordination settings~\citep{bernstein2002complexity,OliehoekAmato2016DecPOMDP,hansen2004dynamic,heifetz2006interactive,shoham2008multiagent,fioretto2018distributed}. In practice, heterogeneity in agent reliability, operational cost, and resource constraints further complicates coordination, especially at scale where performance, risk, and cost must be jointly managed~\citep{RizkAwadTunstel2019CooperativeHeterogeneous,Arjun2025Survey}. Surveys in distributed decision-making and heterogeneous multi-agent coordination consistently identify these issues as key barriers to scalable deployment~\citep{OlfatiSaberFaxMurray2007,RizkAwadTunstel2019CooperativeHeterogeneous}.

Multi-agent (including single-agent) reinforcement learning (MARL) provides tools for learning cooperative behaviors under partial observability and decentralised execution~\citep{Busoniu2008MARLSurvey,HernandezLeal2019Survey}. Value decomposition and centralized training with decentralized policies can maximize joint rewards even with distinct roles and observations~\citep{Sunehag2018VDN,Rashid2018QMIX}. However, many MARL methods implicitly assume similar reliability and reaction costs, and rarely address adaptive selection among heterogeneous agents of uncertain outputs.

Modern AI systems and single-agent orchestration frameworks mitigate these challenges. In planning, reasoning, and AI agent teams, systems increasingly rely on specialized agents with different expertise, reliability, and computational cost. Effective orchestration requires deciding which agent(s) to invoke and when to combine multiple predictions for robustness. Recent work shows that dynamic selection and composition conditioned on context can outperform static pipelines~\citep{Park2023GenerativeAgents,Liang2023AgentSystems,Cheng2024LLMAgents}.

A natural abstraction for sequential decision-making under uncertainty is the multi-armed bandit (MAB) framework, capturing the exploration--exploitation trade-off~\citep{lattimore2020bandit,bubeck2012regret}. In the special case, single-agent extensions of the MAB framework study cooperation, regret minimization, and communication among learners~\citep{landgren2016distributed,gupta2021multi}. Yet most bandit models treat agents as interchangeable up to mean reward and ignore query costs, limiting their suitability for orchestration with heterogeneous reliability, non-stationarity, and explicit invocation costs.

Optimal Transport (OT) offers a complementary way to compare distributions and quantify discrepancies between uncertain outcomes~\citep{villani2003topics,peyre2019computational}. OT is widely used for uncertainty-aware distributional comparison in domain adaptation~\citep{courty2016optimal}, generative modeling~\citep{arjovsky2017wasserstein,bousquet2017optimal}, and statistical inference~\citep{panaretos2020statistical}. However, OT remains underexplored in single-agent delegation as a mechanism to compare agent output distributions and guide adaptive orchestration. Recent work suggests OT can be fruitfully combined with multi-agent reinforcement learning for scalability and alignment in complex environments~\citep{baheri2024synergy}.

\noindent\textbf{Our contribution.}
\begin{enumerate}[label={$\bullet$},leftmargin=*,itemsep=1pt,topsep=1pt]

\item \textbf{Bandit-based orchestration with OT alignment:} 
We cast delegation over a single agent as a stochastic bandit regularized by OT distances between agent output distributions and task-specific references, enabling uncertainty-aware alignment and adaptive decision-making.

\item \textbf{Theoretical guarantees:} 
We establish sublinear OT-regularized regret (\ref{item:sublinera_regret} of Theorem \ref{thm:main_ot_bandits}), structural optimality and robustness to noisy alignment (\ref{item:structure_OT_optimality}-\ref{item:robustness} of Theorem \ref{thm:main_ot_bandits}), and convergence and consistency properties (\ref{item:consistency_weights}-\ref{item:convergence_weights} of Theorem \ref{thm:main_ot_bandits}).

\item \textbf{Empirical validation under heterogeneity and shift:} 
We evaluate BOT-Orch across synthetic and semi-synthetic settings, including a human--AI triage scenario under deployment shift (Section~\ref{suntehtic_experiments}-\ref{sec:real_world_experiments}), demonstrating consistent improvements over standard bandit and heuristic baselines in both i.i.d.\ and non-i.i.d.\ environments.

\end{enumerate}

\section{Related Work}
\noindent\textbf{Agent orchestration.} 
Prior work on agent orchestration largely focuses on selecting agents based on expected utility or offline accuracy \citep{keswani2021towards,lai2022human,rasal2024navigating}. However, these approaches often overlook practical constraints such as availability, cost, and capability. Recent work on human–AI orchestration highlights how inter-agent interactions shape system-level decision networks \citep{collins2024modulating}, yet uncertainty-aware adaptive orchestration under realistic constraints remains underexplored. We address this gap via uncertainty-aware OT-based orchestration. In a related direction, DiscoPOP \citep{discopop2025} learns loss functions without human input, optimizing over objectives. This suggests that automatic objective discovery could extend to orchestration.

\noindent\textbf{Bandit approaches.}
Multi-armed bandits provide a natural framework for sequential decision making under uncertainty, balancing exploration and exploitation \citep{lattimore2020bandit,chen2024survey_contextual_bandits,tong2024survey_MAB,bandit_review2024}. Recent surveys summarize advances in classical and contextual bandits and their applications \citep{chen2024survey_contextual_bandits,bandit_review2024}, while empirical studies highlight design choices affecting performance \citep{bietti2021contextual_bakeoff}. Emerging work connects bandits with large language models in complex environments \citep{xie2026LLM_bandits_survey}. Here, we use bandits to learn orchestrations that maximize expected utility across heterogeneous agents with uncertain performance.

\noindent\textbf{OT for uncertainty-aware orchestration.}
Optimal transport provides a principled framework for comparing probability distributions via geometrically meaningful discrepancies \citep{villani2008optimal}. We use OT to quantify uncertainty in heterogeneous agent outputs through distributional disagreement and variability, building on recent work in OT-based uncertainty quantification with applications to health data science \citep{oliver2025uncertainty,oliver2025laplace,oliver2025conic}. This provides uncertainty-aware weights capturing the trade-off between confidence and disagreement. 
The framework is well-suited for distributional and uncertainty-aware reinforcement learning settings such as those in \citep{osband2013more,bellemare2017distributional}, offering a unified approach to adaptive multi-agent coordination with both theoretical guarantees and strong empirical performance.

\section{Preliminaries and Notation}
In this section, we introduce the relevant concepts that are foundational to our framework.

\subsection{Agents, Tasks, and Task Space}\label{subsec:agents_tasks}

Let $\mathcal{A} := \{a_1,\dots,a_M\}$ denote a finite set of $M$ agents. Time evolves over a finite horizon $t=1,\dots,T$. At each round $t$, a task $x_t \in \mathcal{X}$ is observed. We assume that tasks are generated by a history-dependent stochastic process $x_t \sim \mathcal{P}(\cdot \mid \mathcal{H}_t)$,
where the history filtration is defined as
\[\mathcal{H}_t := \sigma\big( x_s, A_s, \mathbf{R}_s, \mathbf{W}_s : s=1,\dots,t-1 \big),
\]
and $A_s \in \{1,\dots,M\}$ is $\mathcal{H}_s$-measurable and denotes the agent selected at time $s$. For each agent $a_i \in \mathcal{A}$ and task $x_t$, let $R_t^i \in \mathbb{R}$ denote the (possibly counterfactual) reward that would be obtained if $x_t$ were assigned to agent $a_i$, and let $W_t^i \in \mathbb{R}$ denote the corresponding task–agent alignment cost. Define vectors
$\mathbf{R}_t := (R_t^1,\dots,R_t^M)^\top \in \mathbb{R}^M$, $\mathbf{W}_t := (W_t^1,\dots,W_t^M)^\top \in \mathbb{R}^M$.
Analogously, we define the corresponding conditional mean quantities:
$r^i(x_t) := \mathbb{E}[R_t^i \mid x_t]$, $w^i(x_t) := \mathbb{E}[W_t^i \mid x_t]$. Note that task arrivals may depend on past observations, environmental conditions, or previous assignments.

\subsection{Correlated Multi-Agent Rewards}
\label{sec:correlated_rewards}

Let $\mathbf{R}_t \in \mathbb{R}^M$ denote the vector of (joint) rewards at round $t$. We model its conditional moments given the current task $x_t$ and history $\mathcal{H}_t$. Assume there exists a function $\mathbf{r}$ and a history-dependent drift term $\mathbf{f}:\mathcal{H}_t\to\mathbb{R}^M$ such that
\begin{equation*}
\mathbb{E}[\mathbf{R}_t \mid x_t, \mathcal{H}_t]
= \mathbf{r}(x_t) + \mathbf{f}(\mathcal{H}_t),
\end{equation*}
and a positive semidefinite matrix $\Sigma_t$ satisfying
\begin{equation*}
\mathrm{Cov}[\mathbf{R}_t \mid x_t, \mathcal{H}_t] = \Sigma_t.
\end{equation*}
Here, $\Sigma_t$ may vary over time, capturing non-stationarity and cross-agent correlations.

\paragraph{I.I.D. tasks as a special case.}
Suppose $x_t \stackrel{\text{i.i.d.}}{\sim} \mathcal{P}_X$. If, in addition,  the reward process is conditionally independent of the past given the current task, i.e.
$\mathbf{R}_t \perp \mathcal{H}_t \mid x_t$,
then $\mathbf{f} \equiv 0$, and the model reduces to
\[
\mathbb{E}[\mathbf{R}_t \mid x_t] = \mathbf{r}(x_t), 
\qquad
\mathrm{Cov}[\mathbf{R}_t \mid x_t] = \Sigma,
\]
for a constant covariance matrix $\Sigma$.

\subsection{Survival-Based Rewards with Latent Frailty}

In many systems, agent performance is naturally measured via time-to-event outcomes. For each task $x_t$ and agent $a_i$, let $T_t(i) > 0$ denote the time-to-completion random variable. We allow right-censoring and introduce a censoring indicator $\delta_t(i) \in \{0,1\}$, where $\delta_t(i)=1$ indicates that the completion time is fully observed.

We define a baseline survival function
\[
S_i(\tau \mid x_t) := \mathbb{P}(T_t(i) > \tau \mid x_t),
\]
which captures heterogeneity across tasks and agents.

To model unobserved task-level difficulty, we introduce a latent frailty variable $\theta_t > 0$, assumed i.i.d. with $\mathbb{E}[\theta_t]=1$. Conditional on $(x_t, \theta_t)$, we assume a proportional frailty model:
\[
\mathbb{P}(T_t(i) > \tau \mid x_t, \theta_t)
= S_i(\tau \mid x_t)^{\theta_t}.
\]

Equivalently, $\theta_t$ acts as a multiplicative scaling of the baseline cumulative hazard. Under this construction, conditional on $(x_t, \theta_t)$, completion times are independent across agents:
$T_t(i) \perp T_t(j) \mid (x_t, \theta_t)$,
and marginalizing over $\theta_t$ induces dependence:
$T_t(i) \not\perp T_t(j) \mid x_t$.  Specific to our setting, one suitable choice for defining the fraility reward function is as follows,
\[
R_t^i
:=
\delta_t(i)\,
S_i(T_t(i)\mid x_t)^{\theta_t}.\]
This survival-based formulation is suitable for modeling reliability, latency, and time-to-success in heterogeneous multi-agent systems.

\subsection{OT-Based Alignment Costs}\label{sec:OT}
Beyond rewards, we also model how well an agent’s capabilities align with the requirements of a task. In many applications, both tasks and agents are naturally described by distributions rather than single feature vectors. For example, an agent may produce a distribution of outcomes (quality levels, response times, error types), while a task may specify a desired target distribution over outcomes.

Let $\mu_i \in \mathcal{P}(\mathcal{Y})$ denote the outcome distribution induced by agent $a_i$ over a measurable space $\mathcal{Y}$, and let $\nu_t \in \mathcal{P}(\mathcal{Y})$ represent the reference or desired outcome distribution associated with task $x_t$. We quantify the mismatch between an agent and a task using the Wasserstein distance $W_c(\nu_t, \mu_i)$ \cite{villani2003topics}, which measures the minimal cost of transporting mass from one distribution to the other under ground cost $c$. This provides a geometrically meaningful notion of alignment that accounts for the full distribution of outcomes rather than just summary statistics.

We note that constructing a fixed reference measure is often non-trivial in practice, particularly when the task distribution is unknown or evolves over time. In the i.i.d.\ setting, a natural choice is to estimate a single representative measure of the environment using a Wasserstein barycenter of the observed task distributions, which provides a principled Fréchet mean in the space of probability measures \cite{Chewi2025}. This yields a stable global reference that summarises the stationary data-generating process. In contrast, in the non-i.i.d.\ setting, where the task distribution may drift or depend on historical interactions, a single global barycenter is generally insufficient to capture temporal heterogeneity. Instead, one may consider time-adaptive or history-dependent barycenters, computed over sliding windows or exponentially weighted empirical measures, thereby producing a sequence of local barycenters that track the evolving environment. This leads to a dynamic reference structure that better reflects non-stationarity while preserving the geometric advantages of the Wasserstein framework.

To incorporate randomness and modeling noise, we define the stochastic alignment cost
\[
W_t^i := W_c(\nu_t, \mu_i) + \epsilon_t^i,
\qquad \epsilon_t^i \sim \mathcal{N}(0, \sigma_i^2).
\]
Its conditional mean is
\[
w^i(x_t) := \mathbb{E}[W_t^i \mid x_t].
\]
This formulation captures both systematic mismatch (via the Wasserstein term) and unpredictable variability (via the noise term $\epsilon_t^i$). Introducing alignment costs at this stage allows us to jointly model (i) how well an agent is suited to a task and (ii) the stochastic rewards that result from performing it, providing a foundational framework for assignment decisions under uncertainty.

\subsection{Orchestration Policy and Objective}

We now formalize the decision-making problem faced by the orchestrator. At each round $t$, after observing the task $x_t$ and past history $\mathcal{H}_t$, the orchestrator selects a randomized policy $\pi_t \in \Delta(\mathcal{A})$, where $\Delta(\mathcal{A})$ denotes the probability simplex over agents. An agent $i_t \sim \pi_t$ is then sampled and assigned to handle task $x_t$. Randomized policies enable exploration and robustness to uncertainty in agent performance.

\paragraph{Per-round expected utility.}
The expected net reward under policy $\pi_t$ is defined as:
\begin{equation}\label{eqn:per_round_utility}
R_t(\pi_t)
:= \mathbb{E}_{i_t \sim \pi_t}\!\left[
r^{i_t}(x_t) - \lambda w^{i_t}(x_t)
\right],
\qquad \lambda > 0,
\end{equation}
where $r^i(x_t)$ and $w^i(x_t)$ denote the conditional expected reward and alignment cost of assigning task $x_t$ to agent $a_i$.
Equivalently,
\[
R_t(\pi_t)
= \pi_t^\top \big(\mathbf{r}(x_t) - \lambda \mathbf{w}(x_t)\big).
\]

\paragraph{Cumulative objective.}
The cumulative expected reward over a finite horizon $T$ is
\[
R(\pi_{1:T}) := \sum_{t=1}^T R_t(\pi_t).
\]

\subsection{Regret and Optimal policy.}

The orchestrator selects policies without access to the true reward $\mathbf{r}(x_t)$ and alignment cost distributions $\mathbf{w}(x_t)$. We evaluate performance against an oracle that has full knowledge of these quantities. This motivates the notion of an optimal policy, which maximizes expected per-round utility under complete information. Since, the oracle is unavailable in practice, the learner deploys a sequence of policies $\{\pi_t\}_{t=1}^T$ based
on partial information. The performance gap between the learned policy and the oracle is quantified via regret, which measures cumulative suboptimality over the horizon. Sublinear regret implies asymptotic convergence to oracle performance. The optimal (oracle) policy at each round is given by:
\begin{equation}\label{eqn:optimal_policy}
\pi_t^*
:= \arg\max_{\pi \in \Delta(\mathcal{A})}
\pi^\top (\mathbf{r}(x_t) - \lambda \mathbf{w}(x_t)).
\end{equation}

\paragraph{Cumulative regret.}
We define the cumulative regret as the performance gap between the optimal policy and the learned policy:
\begin{equation}\label{eqn:cum_regret}
\mathcal{R}_T
:= \sum_{t=1}^T
\big(R_t(\pi_t^*) - R_t(\pi_t)\big).
\end{equation}

\begin{remark}
If tasks are i.i.d.\ and the environment is stationary, the problem reduces to a standard contextual bandit setting. Otherwise, the regret captures additional inefficiency arising from temporal dependence, non-stationary task arrivals, and stochastic alignment costs.
\end{remark}

\section{Theoretical Properties}
\label{sec:theory}
We begin by stating the standing assumptions that remain in force throughout.

\begin{description}\label{des:assumptions}
\item[(A1)] The ground cost $c:\mathcal{Y}\times\mathcal{Y}\to\mathbb{R}_{+}$ is $L$-Lipschitz and bounded.
\item[(A2)] Rewards are uniformly bounded: $0 \le R_t(i) \le R_{\max}$ for all $t$ and all actions $i\in\mathcal{A}$.
\item[(A3)] The frailty variables $(\theta_t)_{t\ge1}$ admit finite exponential moments.
\item[(A4)] Conditional on $(x_t,\theta_t)$, censoring is independent of survival time.
\item[(A5)] The learning rate satisfies $\eta_t = O(t^{-1/2})$.
\end{description}

Assumptions (A1)–(A5) ensure:  
(i) stability of Wasserstein distances under empirical perturbations;  
(ii) sub-exponential behaviour of frailty-adjusted rewards;  
(iii) well-posedness of the induced softmax stochastic approximation dynamics.

We remark that these conditions are standard in the literature on stochastic approximation, Wasserstein-based learning, and frailty-adjusted reward models~\citep{Ambrosio2008,ShalevShwartz2012StochasticApprox,TrillosSlepcev2016GammaConv}. In practice, they are not restrictive: Lipschitz and bounded costs are typical in OT and multi-agent learning applications, bounded rewards naturally arise in reinforcement learning, and learning rates of order $t^{-1/2}$ are widely used to guarantee convergence. Moreover, our framework generalizes several prior works by allowing frailty variables with arbitrary distributions admitting finite exponential moments, rather than restricting to specific parametric forms~\citep{DelBarrio2019Frailty,Wang2020Stochastic}.    

\subsection{Main Results}\label{subsection:main_results}

Following the assumptions in Section \ref{des:assumptions}, we now present the main theoretical guarantees for the OT-regularised bandit model. The regret analysis is conducted for a general exponential-weights (softmax) procedure applied to bounded, OT-regularised reward signals. Importantly, the resulting guarantees depend only on the boundedness of these rewards and are independent of the specific model. The additional modeling components introduced in the setup, namely, correlated rewards, survival-based frailty, and non-i.i.d. task generation serve as a motivating probabilistic framework in which such bounded reward processes naturally arise. They are not directly used in the regret derivation, but instead provide one possible instantiation of the abstract reward model. Proofs are provided in Appendices~\ref{appendix:stability}--\ref{appendix:convergence}.

\begin{theorem}[Properties of OT-Regularized Orchestration]\label{thm:main_ot_bandits}
Assume (A1)--(A5) hold. Let $\mathcal{A}=\{a_1,\dots,a_M\}$ be a finite set of agents, and let $\pi_t \in \Delta(\mathcal{A})$ denote the orchestration policy at round $t$. Define the OT-regularized per-round utility $R_t(\pi_t)$ be as in \eqref{eqn:per_round_utility}
and let the cumulative regret $\mathcal{R}_T$ be as in \eqref{eqn:cum_regret} where the optimal policy $\pi^\ast_t$ is as in \eqref{eqn:optimal_policy}. Let $\phi_t := \pi_t$ denote the policy weight vector. Assume initial conditions are well-defined: $\phi_0 \in \Delta(|\mathcal{A}|-1)$. Then, the following statements hold for $i \in \{1,\cdots,M\}$:
\begin{enumerate}[leftmargin=*]
\item\label{item:sublinera_regret} \textbf{Sublinear OT-Regret.}
Define the policy by $\pi_t(i)
=
\frac{w_t(i)}{\sum_{j=1}^M w_t(j)}$, and
$w_{t+1}(i)
=
w_t(i)\exp\!\Bigl(\eta_t (R_t^i - \lambda W_t^i)\Bigr)$,
where $R_t^i$ and $W_t^i$ are the realized reward and OT costs.
Define the pseudo-regret as $\mathcal{R}_T
:=
\sum_{t=1}^T
\Big(
\max_{i \in \{1,\dots,M\}}
\mathbb{E}[R_t^i - \lambda W_t^i \mid x_t]
-
\mathbb{E}_{i_t \sim \pi_t}[R_t^{i_t} - \lambda W_t^{i_t}]
\Big)$. Then, the cumulative OT-regularized regret satisfies
\[
\mathcal{R}_T = O(\sqrt{T}).
\]

\item\label{item:structure_OT_optimality} \textbf{Structural OT-Optimality.}
For any $i,j \in \mathcal{A}$, if
$\mathbb{E}[R_t^i \mid x_t] = \mathbb{E}[R_t^j \mid x_t]$
and $w^i(x_t) < w^j(x_t)$, then the OT-regularized utilities satisfy
\[
\mathbb{E}[R_t^i \mid x_t] - \lambda w^i(x_t)
>
\mathbb{E}[R_t^j \mid x_t] - \lambda w^j(x_t).
\]

\item\label{item:robustness}\textbf{Margin Robustness under Noisy Alignment.}
Let observed alignment costs be
$\widetilde{W}_t^i = W_t^i + \epsilon_t^i$, with
$\epsilon_t^i \sim \mathcal{N}(0,\sigma^2)$.
Define $\Delta_{ij}(t) := w^j(x_t) - w^i(x_t)$.
If $|\Delta_{ij}(t)| > \sigma \sqrt{2 \log 2}$,
then
\[
\mathbb{P}\Big(
r^i(x_t) - \lambda \widetilde{W}_t^i
<
r^j(x_t) - \lambda \widetilde{W}_t^j
\Big)
< \tfrac{1}{4}.
\]

\item\label{item:convergence_weights} \textbf{Convergence of Orchestration Weights.}
Assume the policy update follows a stochastic approximation scheme:
$\phi_{t+1} - \phi_t
= \eta_t \Big(
\mathrm{Softmax}(\mathbf{r}(x_t) - \lambda \mathbf{w}(x_t)) - \phi_t
\Big)$,
with step sizes $\eta_t$ satisfying $\sum_t \eta_t = \infty$, $\sum_t \eta_t^2 < \infty$. Then $\phi_t$ converges almost surely to an invariant point $\phi_\infty$ of the mean-field ODE
\[
\dot{\phi}
=
\mathrm{Softmax}\big(\mathbb{E}[\mathbf{r}(x)] - \lambda \mathbb{E}[\mathbf{w}(x)]\big)
- \phi.
\]

\item\label{item:consistency_weights} \textbf{Uniform Consistency of Empirical Rewards.}
Let $\hat r_t(i) := \frac{1}{t}\sum_{s=1}^t R_s^i$
be the empirical average reward and let \(|R_t^i|\le R_{\max}\) be the bounded rewards. Then, we have
\[
\sup_{i \in \mathcal{A}}
\left|
\hat r_t(i)
- \frac{1}{t}\sum_{s=1}^t \mathbb{E}[R_s^i \mid \mathcal{H}_{s-1}]
\right|
\xrightarrow[t\to\infty]{\mathrm{a.s.}} 0.
\]
In particular, in the i.i.d. case this implies
\[
\sup_{i \in \mathcal{A}}
\left|
\hat r_t(i) - \mathbb{E}[R_t^i]
\right|
\xrightarrow[t\to\infty]{\mathrm{a.s.}} 0.
\]
\end{enumerate}
\end{theorem}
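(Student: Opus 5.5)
For \ref{item:sublinera_regret} I plan the standard exponential-weights potential argument in the full-information setting. The update $w_{t+1}(i)=w_t(i)\exp(\eta_t g_t(i))$ uses the realized $g_t(i):=R_t^i-\lambda W_t^i$ for every $i$, so all counterfactual gains are observed. By (A2), $R_t^i\in[0,R_{\max}]$. By (A1), $W_c(\nu_t,\mu_i)\le \|c\|_\infty$, and the Gaussian perturbation is handled by truncation: on the event $\max_{t\le T,i}|\epsilon_t^i|\le \sigma_{\max}\sqrt{2\log(2MT^2)}$ the gains lie in an interval of width $B_T=O(\sqrt{\log T})$, and the complement contributes $O(1)$ to the expected regret.

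With $\Phi_t:=\log\sum_j w_t(j)$, Hoeffding's lemma gives
\[
\Phi_{t+1}-\Phi_t\le \eta_t\,\pi_t^\top g_t+\eta_t^2B_T^2/8 .
\]
Comparing with $\Phi_{T+1}\ge \eta\sum_t g_t(i)-\log M$ for each fixed $i$ (using the standard time-varying-rate version with $\eta_t\asymp t^{-1/2}$ from (A5)) yields
\[
\max_i\sum_t g_t(i)-\sum_t\pi_t^\top g_t=O(\sqrt{T\log M}\,B_T).
\]
Taking conditional expectations given $x_t$ replaces $g_t$ by $r^i(x_t)-\lambda w^i(x_t)$, since $\pi_t$ is $\mathcal{H}_t$-measurable.

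The main obstacle is that the stated comparator is $\sum_t\max_i\mathbb{E}[g_t(i)\mid x_t]$, not $\max_i\sum_t$. My first attempt to bridge this gap is to use the OT structure. Agent rewards depend on $x_t$ only through $\nu_t$ and the fixed $\mu_i$. I would show that the per-round maximizer $i^\ast(x_t)$ of $r^i(x_t)-\lambda w^i(x_t)$ is a single agent $i^\ast$ for all $t$, which holds when one agent's utility margin dominates uniformly over tasks. Granting this, $\sum_t\max_i=\sum_t(\cdot)_{i^\ast}\le\max_i\sum_t$ and the fixed-arm bound transfers. Without such an argument, $\sum_t\max_i$ can exceed $\max_i\sum_t$ by a linear amount, so this reduction is where I expect the proof to be most delicate. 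If the log factor from truncation must be removed, I would instead use that $W_t^i$ is sub-Gaussian and apply the exponential-moment version of Hoeffding's lemma directly.

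Part \ref{item:structure_OT_optimality} is immediate: subtracting the two utilities gives $\lambda(w^j(x_t)-w^i(x_t))>0$, using $\lambda>0$.

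For \ref{item:robustness}, I would rewrite the event as
\[
\lambda(\widetilde W_t^i-\widetilde W_t^j)>r^i(x_t)-r^j(x_t).
\]
Centering at the conditional means, the left side is $-\lambda\Delta_{ij}(t)+\lambda Z$, where $Z$ is the difference of the noises. Its added-noise part is $\mathcal{N}(0,2\sigma^2)$, and any residual fluctuation of $W_t^i$ is treated as part of the same centered Gaussian. The event then becomes
\[
Z>\Delta_{ij}(t)+(r^i(x_t)-r^j(x_t))/\lambda ,
\]
and I would bound it with the Gaussian Chernoff tail $\exp(-u^2/(4\sigma^2))$. The statement does not assume $r^i=r^j$, so I keep the reward gap and orient the sign so that the favourable gap only enlarges the threshold. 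Reducing to the stated threshold $\sigma\sqrt{2\log 2}$ and getting the constant $\tfrac14$ requires careful bookkeeping of the variance factor and of the sign. I would verify that step explicitly, and use the sharper Mills-ratio bound if Chernoff is off by a constant.

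For \ref{item:convergence_weights} I would use the Kushner–Yin ODE method. Write the driving term as $\mathrm{Softmax}(\mathbf r(x_t)-\lambda\mathbf w(x_t))$. It is bounded in the simplex, so the noise sequence $M_{t+1}$, defined as the driving term minus its conditional mean, is a bounded martingale difference sequence. Hence $\sum_t\eta_tM_{t+1}$ converges a.s. because $\sum_t\eta_t^2<\infty$, and the iterates stay in the simplex, which is compact and convex. The limit ODE $\dot\phi=p-\phi$ is linear with globally exponentially stable equilibrium $p$, so $\phi_t\to p$ a.s. by the Benaïm–Hirsch asymptotic pseudo-trajectory theorem.

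The remaining point is to identify the mean field with the stated vector field $\mathrm{Softmax}(\mathbb E[\mathbf r(x)]-\lambda\mathbb E[\mathbf w(x)])$. I would do this by taking the driving field at the averaged context values, as the statement's ODE prescribes, and checking that the conditional mean of the driving term equals this softmax. That holds when the score $\mathbf r(x)-\lambda\mathbf w(x)$ does not vary with $x$ in the relevant sense. I would flag this identification as the second genuine difficulty, since softmax is nonlinear.

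For \ref{item:consistency_weights}, I would set $D_s^i:=R_s^i-\mathbb E[R_s^i\mid\mathcal H_{s-1}]$. These are martingale differences bounded by $2R_{\max}$. Azuma–Hoeffding gives
\[
\mathbb P\bigl(|t^{-1}\textstyle\sum_{s\le t} D_s^i|>\varepsilon\bigr)\le 2e^{-t\varepsilon^2/(8R_{\max}^2)},
\]
which is summable in $t$, so Borel–Cantelli yields a.s. convergence for each $i$. A union over the finitely many $M$ agents gives the supremum. In the i.i.d. case the conditional means are the constant $\mathbb E[R_t^i]$, which gives the second display.
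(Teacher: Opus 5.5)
You located the right obstacles, but extra care will not close them. As written, Parts 1, 3 and 4 do not follow from (A1)--(A5), so the bridges you propose to ``grant'' cannot be supplied. The paper's own proofs get through only by proving weaker or different claims.

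\textbf{Part 1.} The single-maximizer reduction fails precisely because of the OT structure you want to exploit. Take $M=2$, equal conditional mean rewards, a bounded Lipschitz metric cost with $W_c(\mu_1,\mu_2)=d>0$, and i.i.d.\ tasks with $\nu_t$ uniform on $\{\mu_1,\mu_2\}$. The per-round best agent is the one with $\mu_i=\nu_t$. Since $\pi_t$ is $\mathcal H_t$-measurable and hence independent of $x_t$, each round contributes $\tfrac{\lambda d}{2}\bigl(\pi_t(1)+\pi_t(2)\bigr)=\tfrac{\lambda d}{2}$ to the expected pseudo-regret, so $\mathbb E\,\mathcal R_T=\lambda dT/2$. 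The paper does not bridge this either: it fixes a comparator $i^\ast$ and then reports the fixed-comparator bound as $\mathcal R_T$.

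What is provable is $\max_i\sum_t\bar u_t^i-\sum_t\pi_t^\top\bar u_t=O(\sqrt T)$ with $\bar u_t^i:=r^i(x_t)-\lambda w^i(x_t)$, and even that needs care with the learning rate. The update $w_{t+1}(i)=w_t(i)e^{\eta_t g_t(i)}$ gives $\pi_t\propto\exp\bigl(\sum_{s<t}\eta_s g_s\bigr)$. The potential argument then bounds only the weighted sum $\sum_t\eta_t\bigl(g_t(i)-\pi_t^\top g_t\bigr)$, and this does not give an unweighted bound because its terms can be negative. Concretely, take $\eta_t=c\,t^{-1/2}$ and the schedule $\nu_t=\mu_1$ for $t\le T/3$, $\nu_t=\mu_2$ afterwards. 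For large $T$ this update stays on agent 1 until the end and loses about $\lambda dT/3$ to agent 2. The standard time-varying-rate bound you invoke is for $\pi_t\propto\exp(\eta_tG_{t-1})$ with $G_{t-1}:=\sum_{s<t}g_s$. With the update as written, use a constant $\eta\asymp\sqrt{\log M/T}$, as in the paper's Step 1.

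\textbf{Part 3.} With $|\Delta_{ij}(t)|$ and no hypothesis relating $r^i(x_t)$ to $r^j(x_t)$, the claim is false: if $r^i=r^j$ and $\Delta_{ij}(t)<-\sigma\sqrt{2\log 2}$, the event has probability exceeding $\tfrac12$. So ``orienting the sign'' means adding the assumptions $\Delta_{ij}(t)>0$ and $r^i(x_t)\ge r^j(x_t)$. You must also take $W_t^i=w^i(x_t)$. Folding the intrinsic $\mathcal N(0,\sigma_i^2)$ fluctuation into $Z$ raises its variance above $2\sigma^2$, and the stated threshold then no longer suffices.

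Under these assumptions the probability is at most $\Phi(-\sqrt{\log 2})\approx 0.20$, with $\Phi$ the standard normal CDF. However, your Chernoff bound gives $2^{-1/2}$ at the threshold and the Mills-ratio bound gives about $0.34$, so neither falls below $\tfrac14$; you need the exact value. The paper's bound $\tfrac12e^{-x^2/2}$ gives $2^{-3/2}\approx 0.35$, so it does not justify the paper's last line either.

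\textbf{Part 4.} The identification you flag fails in general. For i.i.d.\ tasks the iterates converge to $\mathbb E_x\bigl[\mathrm{Softmax}(\mathbf r(x)-\lambda\mathbf w(x))\bigr]$. This in general differs from $\mathrm{Softmax}(\mathbb E[\mathbf r(x)]-\lambda\mathbb E[\mathbf w(x)])$; they coincide, for instance, when the score is a.s.\ constant up to multiples of $\mathbf 1$. The paper's appendix proof in fact ends at the former point, and it adds a stationarity and independence assumption on $(x_t)$ to make the drift constant. Your ODE step needs that assumption too, and (A1)--(A5) do not supply it.

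\textbf{Parts 2 and 5} are fine. In Part 5, your Azuma--Hoeffding plus Borel--Cantelli argument is an equally valid substitute for the paper's martingale strong law.
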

We note that Theorem \ref{thm:main_ot_bandits} provides a foundational characterization of OT-regularized bandit learning. It integrates distributional alignment into the reward structure, yielding principled agent differentiation~\ref{item:structure_OT_optimality}, sublinear cumulative regret~\ref{item:sublinera_regret}, robustness to noisy observations~\ref{item:robustness}, convergence of orchestration weights~\ref{item:convergence_weights}, and uniform consistency of empirical rewards~\ref{item:consistency_weights}. The framework enables heterogeneous agents to learn coordinated policies under distributionally-aware uncertainty, supporting robust, adaptive orchestration in stochastic, partially observable environments. We refer the reader to Appendix \ref{appendix:proofs} for a proof of Theorem \ref{thm:main_ot_bandits}.

\section{The proposed algorithm}
BOT-Orch combines bandit-based selection with OT alignment and survival-based rewards to handle heterogeneous, non-stationary agents. In the \emph{i.i.d. task setting} (Algorithm~\ref{alg:bot-iid} in the Appendix), a Boltzmann policy selects agents using exponentially smoothed rewards penalized by OT misalignment, balancing exploitation and alignment, while survival rewards capture latent difficulty, censoring, and reliability. The \emph{non-i.i.d. extension} (Algorithm~\ref{alg:bot-noniid} in the Appendix) allows history-dependent task distributions and reward updates, handling temporal correlations, non-stationarity, and regime shifts via a correction term that encodes memory effects. Overall, BOT-Orch is a \emph{risk-aware, alignment-regularized bandit algorithm} in distributional space, where OT enforces task-agent compatibility and survival rewards provide robustness to censoring and heterogeneity. Its modular design accommodates alternative OT solvers, survival models, and exploration schemes, and motivates regret analysis under composite reward--cost objectives as well as questions of convergence and adaptivity in non-stationary environments.

\section{Synthetic Experiments}
\label{suntehtic_experiments}

%
%
\subsection{Dataset and Task Description}
We consider both i.i.d.\ and non-i.i.d.\ regimes (see Fig.~\ref{fig:iid}--\ref{fig:non_iid} in the Appendix) in order to evaluate BOT-Orch under both stationary and evolving environments. The i.i.d.\ settings isolate performance under stable task distributions, whereas the non-i.i.d.\ settings introduce different forms of temporal dependence and distributional shift.

In the stationary setting, we consider two environments. In \textbf{IID-G}, tasks $x_t$ are sampled i.i.d.\ over $\mathcal{X}$ and agent rewards are drawn from fixed Gaussian distributions with matched means ($\approx 0.5$) but heterogeneous higher-order structure. This setting evaluates whether the orchestration policy can distinguish between agents beyond mean reward alone. In \textbf{IID-M}, tasks are sampled i.i.d.\ from a half-moons distribution, while rewards are drawn from fixed distributions with similar means but differing variance, skewness, and bimodality, thereby introducing additional distributional heterogeneity.

We then consider three forms of non-stationarity. In \textbf{NonIID-PS}, rewards are piecewise-stationary, with fixed means but variance shifts occurring at unknown changepoints, inducing distributional shift without changing expected reward. In \textbf{NonIID-SD}, non-stationarity is introduced gradually through sinusoidal drift in the reward means over time. Finally, in \textbf{NonIID-BB}, latent reward means evolve according to a temporally correlated Brownian-bridge process with fixed endpoints, capturing smoothly varying latent dynamics.

All experiments were conducted on a standard x86\_64 CPU platform using 2 CPU cores, 13.6\,GB RAM, and 107\,GB disk storage.
%
%

\subsection{Baselines}
\label{baselines}
We compare BOT-Orch against the following baselines. All baselines operate on the same task stream and are evaluated with identical metrics and random seeds.
\begin{enumerate}[label={$\bullet$},leftmargin=*,itemsep=1pt,topsep=1pt]
  \item \textbf{BOT-Orch (ours).}
  Uses alignment-adjusted rewards $r_t(i) = \hat r_t(i) - \lambda W_t(i)$ and samples the selected agent $i_t$ from a softmax policy over $r_t(i)$ (Algorithm~\ref{algorithm1}--\ref{algorithm2}). This couples OT alignment with sequential exploration/exploitation.
  \item \textbf{No-OT ($\lambda = 0$).}
  Ablation removing the OT alignment term. The policy is computed from $\hat r_t(i)$ only. This isolates the contribution of distributional alignment.
  \item \textbf{Random.}
  Uniformly samples an agent each round. This serves as a naive lower bound that does not learn from observations.
  \item \textbf{UCB1 (MAB).}
  A classical multi-armed bandit baseline that selects the agent with the highest upper confidence bound based on empirical reward estimates, balancing exploration and exploitation without OT alignment.
\end{enumerate}
We exclude baselines that require \emph{full-information} feedback, i.e., access to the rewards of all agents at every round, since our setting assumes standard \emph{bandit feedback}, where only the reward of the selected agent is observed. This includes simple greedy strategies based on exponentially smoothed reward estimates, which are not directly applicable under partial feedback without additional exploration or uncertainty-estimation mechanisms.

%

\subsection{Evaluation Metrics}
We use the following metrics for experimental comparison, computed per seed and aggregated as mean$\pm$95\% confidence intervals across seeds.
\label{evaluation_metrics}
\begin{enumerate}[label={$\bullet$},leftmargin=*,itemsep=1pt,topsep=1pt]
  \item \textbf{Cumulative net utility.}
  We evaluate the OT-regularized net utility $U_t(i_t) \;=\; R_t(i_t) - \lambda\, W_t(i_t)$, and report the cumulative net utility $\sum_{t=1}^T U_t(i_t)$. This captures the overall performance when explicitly trading off reward quality and alignment cost via $\lambda$.

  \item \textbf{Cumulative alignment cost.}
  We report $\sum_{t=1}^T W_t(i_t)$, the cumulative OT alignment cost incurred by the selected agent assignments. Lower values indicate more distributionally aligned agent-task matching.

  \item \textbf{Oracle regret.}
  We report OT-regularized cumulative regret relative to the best alignment-adjusted agent at each round: $
  \sum_{t=1}^T (\max_{i} U_t(i) - U_t(i_t))$ measuring efficiency loss due to suboptimal selection under uncertainty and non-stationarity.
\end{enumerate}
%
%
%
\subsection{Results}

BOT-Orch achieves the highest cumulative net utility and lowest oracle regret across all environments (Table \ref{tab:align_cost_ci_4col}-\ref{tab:net_utility_ci_4col}), consistently outperforming No-OT, Random, and UCB1. The improvement is particularly pronounced in non-i.i.d.\ settings, where BOT-Orch maintains low regret under distributional shifts such as piecewise variance changes and smooth drift. These results indicate that incorporating OT-based alignment enables more effective adaptation to heterogeneous and non-stationary agent behaviour.

%
In Appendix~\ref{add_suntehtic_experiments}, we evaluate BOT-Orch on additional synthetic benchmarks using survival-based metrics, showing that BOT-Orch achieves best performance. We further conduct an ablation study on the parameter $\lambda$ (Appendix \ref{sec:ablation_synthetic}). As $\lambda$ increases, BOT-Orch exhibits performance improvements, with large gains relative to $\lambda=0$ and diminishing returns at higher values.
%
%
\begin{table*}[t]
\centering
\scriptsize
\setlength{\tabcolsep}{4pt}
\begin{tabular}{|l|cccc|}
\hline
Environment & BOT-Orch & No-OT ($\lambda$=0) & Random & UCB1 (MAB) \\
\hline
\textbf{IID} & \multicolumn{4}{c|}{} \\
IID-G & \textbf{537.43$\pm$16.57} & 656.31$\pm$22.64 & 673.99$\pm$19.08 & 664.89$\pm$29.85 \\
IID-M & \textbf{459.84$\pm$9.27} & 545.85$\pm$17.74 & 553.07$\pm$19.16 & 549.48$\pm$7.40 \\
\hline
\textbf{Non-IID} & \multicolumn{4}{c|}{} \\
NonIID-BB & \textbf{410.04$\pm$71.06} & 503.14$\pm$91.24 & 520.28$\pm$101.61 & 496.60$\pm$79.57 \\
NonIID-PS & \textbf{571.43$\pm$18.71} & 713.81$\pm$25.79 & 734.65$\pm$30.08 & 729.51$\pm$28.34 \\
NonIID-SD & \textbf{564.13$\pm$18.11} & 717.24$\pm$22.51 & 707.69$\pm$18.33 & 708.60$\pm$16.06 \\
\hline
\end{tabular}
\caption{Cumulative Alignment Cost (mean \(\pm\) 95\% CI across 5 seeds; \(T=200\)).  IID condition uses Algorithm~1; Non-IID uses Algorithm~2. Best in bold.}
\label{tab:align_cost_ci_4col}
\end{table*}

\begin{table*}[t]
\centering
\scriptsize
\setlength{\tabcolsep}{1pt}
\makebox[\textwidth][c]{%
\begin{tabular}{|l|cccc|cccc|}
\cline{1-9} 
& \multicolumn{4}{c|}{\textbf{Cumulative Net Utility}} & \multicolumn{4}{c|}{\textbf{Oracle Regret}} \\
\hline
\multicolumn{1}{|l|}{Environment} & BOT-Orch & No-OT ($\lambda$=0) & Random & UCB1 (MAB) & BOT-Orch & No-OT ($\lambda$=0) & Random & UCB1 (MAB) \\
\hline
\multicolumn{1}{|l|}{\textbf{IID}} & \multicolumn{4}{c|}{} & \multicolumn{4}{c|}{} \\
\multicolumn{1}{|l|}{IID-G} & \textbf{-467.528$\pm$17.60} & -588.34$\pm$23.19 & -605.71$\pm$18.91 & -603.31$\pm$29.13 & \textbf{122.65$\pm$5.68} & 243.47$\pm$14.26 & 260.83$\pm$11.59 & 258.43$\pm$18.11 \\
\multicolumn{1}{|l|}{IID-M} & \textbf{-386.10$\pm$10.04} & -478.54$\pm$20.49 & -482.81$\pm$18.65 & -484.57$\pm$9.35 & \textbf{74.70$\pm$5.45} & 166.25$\pm$11.99 & 170.52$\pm$10.99 & 172.28$\pm$8.40 \\
\hline
\multicolumn{1}{|l|}{\textbf{Non-IID}} & \multicolumn{4}{c|}{} & \multicolumn{4}{c|}{} \\
\multicolumn{1}{|l|}{NonIID-BB} & \textbf{-335.25$\pm$70.42} & -434.01$\pm$86.69 & -449.34$\pm$100.73 & -426.09$\pm$79.48 & \textbf{76.85$\pm$7.42} & 175.61$\pm$16.44 & 190.94$\pm$31.44 & 167.69$\pm$19.20 \\
\multicolumn{1}{|l|}{NonIID-PS} & \textbf{-498.28$\pm$22.49} & -642.77$\pm$25.89 & -666.10$\pm$29.31 & -663.22$\pm$27.77 & \textbf{126.77$\pm$11.68} & 271.26$\pm$11.84 & 294.58$\pm$20.25 & 291.71$\pm$14.45 \\
\multicolumn{1}{|l|}{NonIID-SD} & \textbf{-492.92$\pm$18.74} & -644.68$\pm$21.62 & -637.82$\pm$18.54 & -643.69$\pm$17.20 & \textbf{128.77$\pm$13.30} & 280.53$\pm$10.81 & 273.67$\pm$10.74 & 279.54$\pm$11.52 \\
\hline
\end{tabular}}
\caption{Cumulative Net Utility and Oracle Regret (mean \(\pm\) 95\% CI across 5 seeds; \(T=200\)). IID condition uses Algorithm~1; Non-IID uses Algorithm~2. Best in bold.}
\label{tab:net_utility_ci_4col}
\end{table*}

\begin{table*}[t]
\centering
\scriptsize
\setlength{\tabcolsep}{3pt}
\makebox[\textwidth][c]{
\begin{tabular}{|l|cccc|cccc|}
\hline
& \multicolumn{4}{c|}{\textbf{Cumulative Net Utility}}
& \multicolumn{4}{c|}{\textbf{Oracle Regret}} \\
\hline
Environment & BOT-Orch & No-OT ($\lambda{=}0$) & Random & UCB1
 & BOT-Orch & No-OT ($\lambda{=}0$) & Random & UCB1 \\
\hline
\textbf{IID }
  & \textbf{108.84{$\pm$2.22}}
  & 103.37{$\pm$2.51}
  & 83.98{$\pm$5.92}
  & 80.55{$\pm$4.88}
  & \textbf{2.29{$\pm$2.11}}
  & 9.80{$\pm$1.61}
  & 28.72{$\pm$6.03}
  & 31.31{$\pm$3.98} \\
\textbf{Non-IID}
  & \textbf{110.61{$\pm$1.03}}
  & 103.17{$\pm$1.80}
  & 79.78{$\pm$5.52}
  & 85.82{$\pm$4.91}
  & \textbf{0.59{$\pm$0.93}}
  & 10.14{$\pm$1.17}
  & 32.97{$\pm$4.87}
  & 26.26{$\pm$4.28} \\
\hline
\multicolumn{9}{c}{} \\ 
\multicolumn{9}{c}{} \\ 
\hline
& \multicolumn{4}{c|}{\textbf{Team Accuracy}}
& \multicolumn{4}{c|}{\textbf{Cumulative Alignment Cost}} \\
\hline
Environment & BOT-Orch & No-OT ($\lambda{=}0$) & Random & UCB1
& BOT-Orch & No-OT ($\lambda{=}0$) & Random & UCB1 \\
\hline
\textbf{IID }
  & \textbf{0.980{$\pm$0.010}}
  & 0.907{$\pm$0.022}
  & 0.917{$\pm$0.020}
  & 0.902{$\pm$0.026}
  & \textbf{1.28{$\pm$0.53}}
  & 10.51{$\pm$1.61}
  & 10.28{$\pm$2.01}
  & 11.14{$\pm$1.33} \\
\textbf{Non-IID}
  & \textbf{0.993{$\pm$0.007}}
  & 0.905{$\pm$0.016}
  & 0.905{$\pm$0.024}
  & 0.919{$\pm$0.025}
  & \textbf{0.85{$\pm$0.23}}
  & 10.84{$\pm$1.18}
  & 11.69{$\pm$1.62}
  & 9.46{$\pm$1.43} \\
\hline
\end{tabular}
}
\caption{%
  Deployment metrics across all four methods and both experimental
  conditions (Mean $\pm$ 95\% CI across 30 seeds, $T{=}114$,
  $\lambda{=}3.0$).
  IID condition uses Algorithm~1; Non-IID uses Algorithm~2
  (ID patients rounds 1--57, shifted rounds 58--114). Best in bold.\vspace{-5pt}}
\label{tab:combined_results}
\end{table*}

\section{Semi-Synthetic Experiments: Human-AI Triage Under\\Deployment Shift}
\label{sec:real_world_experiments}
%
%
\subsection{Dataset and Task Description}
We use the Breast Cancer Wisconsin (Diagnostic) dataset, consisting of 569 patient cases with 30 numerical features and a binary target (malignant vs.\ benign). The data is split into 60\% train, 20\% calibration, 10\% Test-ID, and 10\% Test-Shift. To simulate deployment shift, we perturb a subset of features in Test-Shift with additive Gaussian noise $\mathcal{N}(0,0.64)$ and a $+0.5$ standard-unit bias, modelling a change in patient population at deployment. At each round $t$, a patient $x_t$ arrives and the selected agent receives a reward of 1 if it classifies the patient correctly and 0 otherwise, under standard bandit feedback.

We consider two agents: an AI classifier and a human proxy with complementary accuracy, where the human performs better on shifted cases while the AI performs better in-distribution. Full model details and accuracy statistics are provided in Appendix~\ref{sec:app_setting}. All experiments were conducted on a standard x86\_64 CPU platform using 2 CPU cores, 13.6\,GB RAM, and 107\,GB disk storage.

\begin{figure}[t]
\centering
\includegraphics[width=1.0\linewidth]{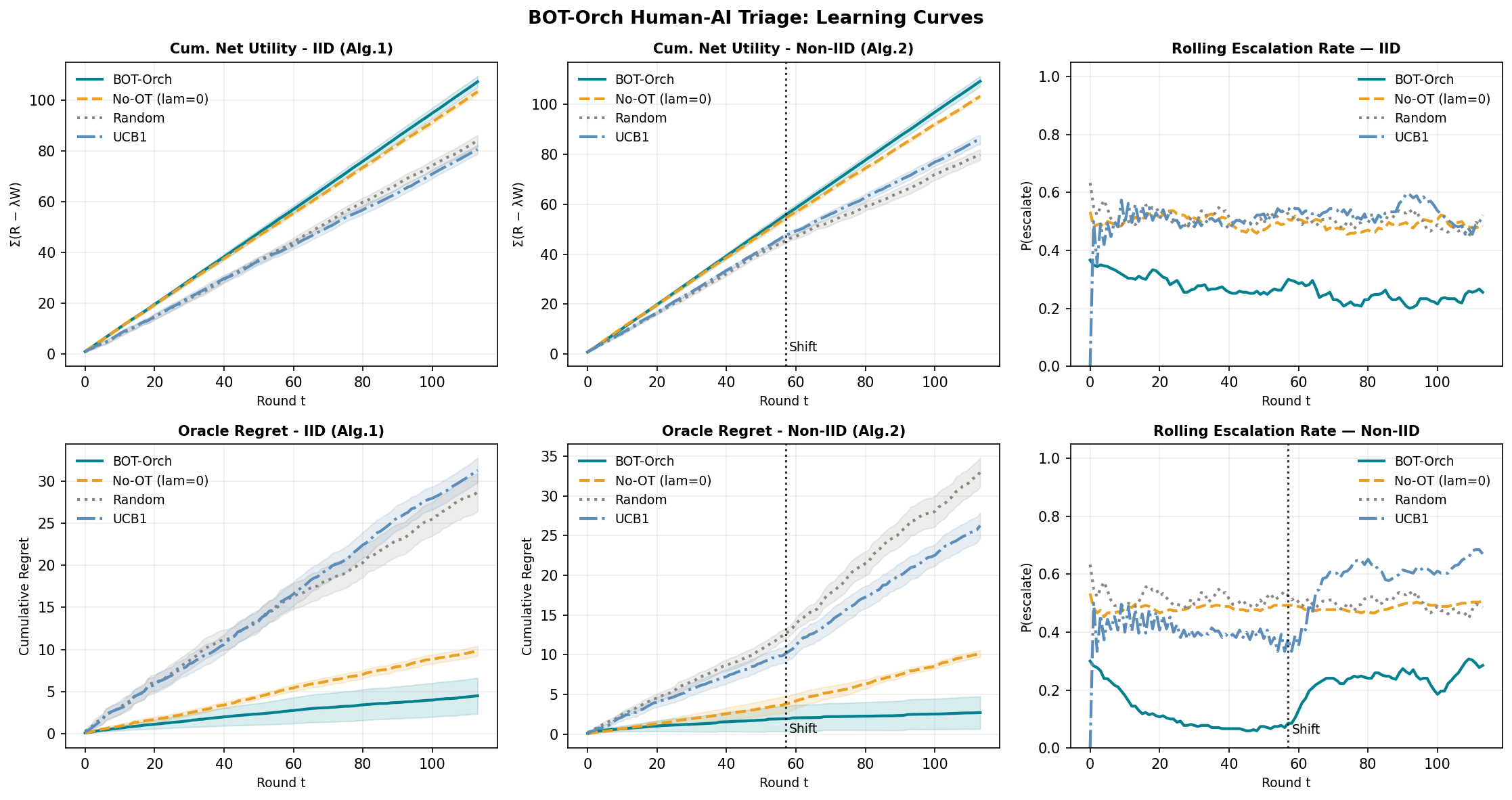}
\caption{%
  \textbf{Cumulative learning curves.}
  \textit{Top row}: cumulative net utility.
  \textit{Bottom row}: oracle regret.
  \textit{Right column}: rolling escalation rate
  (window $w{=}8$ rounds).
  Left panels: IID condition (Algorithm~1);
  middle panels: Non-IID condition (Algorithm~2),
  with the dotted vertical line marking the shift onset
  at round~57;
  right panels: escalation rate evolution.
  }
\label{fig:learning_curves}
\end{figure}

\subsection{Baselines}

We compare the same four methods used in the synthetic experiments, now applied to the clinical triage task. We use $\alpha=0.90$, $\eta=5.0$, $\lambda=3.0$, and, for the non-i.i.d.\ variant, the same history correction form as Algorithm~2 with $\beta=0.05$. Each episode has $T=114$ rounds, and we average over $n=30$ random seeds (patient orderings). We report mean $\pm$ 95\% CI across seeds.

\subsection{Evaluation Metrics}

From Section~\ref{evaluation_metrics}, we use \textbf{cumulative net utility}, \textbf{cumulative alignment cost}, and \textbf{oracle regret}, defined as above. We additionally report metrics specific to the human--AI deferral setting, namely \textbf{team accuracy}, defined as the fraction of patients correctly classified by the selected agent (AI or human), and \textbf{escalation rate}, defined as the fraction of patients routed to the human expert.

These additional metrics capture complementary aspects of orchestration quality beyond cumulative utility and regret. Together, these metrics provide insight into the trade-off between automation and human intervention, particularly under deployment shift and uncertainty.
%
%
%
\subsection{Results}

Table~\ref{tab:combined_results} summarizes the results. BOT-Orch achieves the highest cumulative net utility and lowest oracle regret in both regimes, outperforming all baselines. In the IID condition, it improves net utility by $+5.5$ over No-OT while reducing alignment cost by an order of magnitude. In the Non-IID setting, the gain is larger, reaching $110.61 \pm 1.03$ net utility and $0.59 \pm 0.93$ regret compared to $103.17 \pm 1.80$ and $10.14 \pm 1.18$ for No-OT.

The No-OT ablation confirms that the OT term drives the improvement: removing it increases regret and alignment cost. BOT-Orch also exhibits targeted escalation under shift, routing fewer patients overall while increasing escalation on shifted cases, indicating effective adaptation to distributional mismatch. Figure~\ref{fig:learning_curves} further shows that BOT-Orch separates from all baselines early and maintains the performance gap over time. In the Non-IID setting, the escalation rate increases after the shift, demonstrating online adaptation to the changing population. Furthermore, the rolling escalation rate in Figure~\ref{fig:learning_curves} shows how the policy adapts over time, increasing reliance on the human expert when distributional shift or uncertainty rises.

Additional results are presented in Appendix \ref{additional_results_Med}, including more results on the escalation rate, diagnostic analysis, and an ablation study on the parameter $\lambda$. We observe that, as $\lambda$ increases, performance improves markedly peaking around $\lambda \approx 3.0$, after which gains saturate or slightly diminish at very high values.

\section{Conclusion}

We introduced BOT-Orch, a framework for uncertainty-aware agent orchestration that formulates delegation as an OT-regularized bandit problem over heterogeneous agents. By combining sequential decision-making with distributional alignment, BOT-Orch jointly accounts for agent reliability, alignment quality, and uncertainty when selecting agents under stochastic and potentially non-stationary environments. We established theoretical guarantees including sublinear OT-regularized regret, robustness to noisy alignment, and convergence of orchestration weights. Empirically, BOT-Orch consistently outperformed standard bandit and heuristic baselines across synthetic and semi-synthetic settings, particularly under distribution shift and heterogeneous agent behaviour.

More broadly, our results suggest that incorporating distributional structure directly into orchestration policies can substantially improve robustness and adaptability in modern multi-agent AI systems. This is especially relevant for emerging ecosystems composed of specialised models, reasoning systems, and tool-using agents, where orchestration increasingly plays a central role in overall system performance.

Several limitations remain. First, computing Wasserstein distances can become computationally expensive in high-dimensional settings, potentially limiting scalability in real-time applications. Second, the alignment parameter $\lambda$ introduces a reward--alignment trade-off that may require calibration across environments. Third, the framework assumes access to suitable task-specific reference distributions, which may be difficult to obtain or estimate reliably in open-world settings.

Future work will investigate more scalable alignment mechanisms, adaptive calibration strategies, and evaluation in larger-scale real-world orchestration environments. We also believe that extending uncertainty-aware orchestration beyond single-agent delegation toward richer coordination and interaction settings is a promising direction for further study.
\bibliographystyle{apalike}
\bibliography{bibliography}

\newpage

\appendix

\begin{center}

{\Large\bf SUPPLEMENTARY MATERIALS}

\end{center}

\setcounter{equation}{0}

\section*{Contents}

\startcontents[sections]
\printcontents[sections]{l}{1}{\setcounter{tocdepth}{2}}

\newpage
\section{Algorithm: Bandit--OT Orchestration with Survival-Based Rewards}

\subsection{i.i.d. Task Arrivals}
\begin{algorithm}[h!]
   \caption{BOT-Orch: i.i.d. Task Version}
   \label{alg:bot-iid}
\begin{algorithmic}[1]
   \State \textbf{Input:} Agents $\mathcal{A} = \{a_1,\dots,a_M\}$, OT cost $c$, alignment weight $\lambda \ge 0$, horizon $T$, survival model $S_i(\cdot)$, learning rate $\alpha$, inverse temperature $\eta_t$
   \State Initialize estimated rewards $\hat{r}_0(i) = 0$ and frailty $\theta_0 = 1$ for all $i \in \mathcal{A}$
   \For{$t = 1$ \textbf{to} $T$}
       \State Sample task $x_t \sim \mathcal{P}_X$ \Comment{i.i.d. task sampling}
       \State Sample latent frailty $\theta_t \sim p(\theta)$
       \For{$i = 1$ \textbf{to} $M$}
           \State Compute agent output distribution $\mu_i$
           \State Compute OT alignment cost $\mathcal{W}_t(i) = W_c(\nu_t, \mu_i) + \epsilon_t(i)$
           \State Sample survival time $T_t(i)$ and censoring $\delta_t(i)$
           \State Compute reward: $R_t(i) = \delta_t(i)\, S_i(T_t(i) \mid x_t)^{\theta_t}$
           \State Update estimated reward: $\hat{r}_t(i) = \alpha\hat{r}_{t-1}(i) + (1-\alpha)R_t(i)$
       \EndFor
       \State Compute orchestration policy:
       \[
           \pi_t(i) = \frac{\exp\!\left(\eta_t[\hat{r}_t(i)-\lambda \mathcal{W}_t(i)]\right)}
           {\sum_{j=1}^{M} \exp\!\left(\eta_t[\hat{r}_t(j)-\lambda \mathcal{W}_t(j)]\right)}
       \]
       \State Sample and execute agent $i_t \sim \pi_t$
   \EndFor
   \State \textbf{Output:} Policies $\{\pi_t\}_{t=1}^T$ and cumulative reward
\end{algorithmic}
\label{algorithm1}
\end{algorithm}
\newpage
\subsection{Non-i.i.d. Task Arrivals}
\begin{algorithm}[h!]
   \caption{BOT-Orch: Non-i.i.d. Task Version}
   \label{alg:bot-noniid}
\begin{algorithmic}[1]
   \State \textbf{Input:} Same as Algorithm~\ref{alg:bot-iid}
   \State Initialize $\hat{r}_0(i) = 0$ and $\theta_0 = 1$ for all $i \in \mathcal{A}$
   \For{$t = 1$ \textbf{to} $T$}
       \State Sample task $x_t \sim \mathcal{P}(x_t \mid \mathcal{H}_t)$ \Comment{history-dependent, non-i.i.d.}
       \State Sample latent frailty $\theta_t \sim p(\theta_t \mid \mathcal{H}_t)$
       \For{$i = 1$ \textbf{to} $M$}
           \State Compute agent output distribution $\mu_i$
           \State Compute OT alignment cost $\mathcal{W}_t(i) = W_c(\nu_t, \mu_i) + \epsilon_t(i)$
           \State Sample survival time $T_t(i)$ and censoring $\delta_t(i)$
           \State Compute reward: $R_t(i) = \delta_t(i)\, S_i(T_t(i) \mid x_t)^{\theta_t}$
           \State Update estimated reward using history: \Comment{temporal dependence from past rewards}
           \[
               \hat{r}_t(i) = \alpha\hat{r}_{t-1}(i) + (1-\alpha)R_t(i) + f_i(\mathbf{R}_{1:t-1})
           \]
       \EndFor
       \State Compute orchestration policy:
       \[
           \pi_t(i) = \frac{\exp\!\left(\eta_t[\hat{r}_t(i)-\lambda \mathcal{W}_t(i)]\right)}
           {\sum_{j=1}^{M} \exp\!\left(\eta_t[\hat{r}_t(j)-\lambda \mathcal{W}_t(j)]\right)}
       \]
       \State Sample and execute agent $i_t \sim \pi_t$
       \State Update history $\mathcal{H}_{t+1} = \mathcal{H}_t \cup \{(x_t, \mathbf{R}_t, \mathcal{W}_t)\}$
   \EndFor
   \State \textbf{Output:} Policies $\{\pi_t\}_{t=1}^T$ and cumulative reward
\end{algorithmic}
\label{algorithm2}
\end{algorithm}
\paragraph{Derivation of Algorithms from the Theoretical Framework (Special Case Realisation).}
The BOT-Orch algorithms in Algorithms~\ref{alg:bot-iid} and \ref{alg:bot-noniid} can be formally interpreted as special cases of the abstract framework introduced in Theorem~\ref{thm:main_ot_bandits}. At the theoretical level, the model is defined in terms of an unobserved alignment-adjusted reward process
\[
r_t(i) = \mathbb{E}[R_t(i)\mid x_t] - \lambda W_c(\mu_i,\nu_t),
\]
over which the learning dynamics are characterised via exponential-weights updates on the simplex. The algorithms instantiate this framework by specifying an explicit stochastic realisation of the reward process together with a consistent estimator of its conditional expectation. In particular, the survival–frailty construction generates bounded random variables \(R_t(i)\) whose conditional expectation coincides with the abstract reward functional assumed in the theory, thereby embedding the model within a well-defined stochastic process satisfying assumptions (A2)–(A4). 

The empirical quantity \(\hat r_t(i)\), defined via exponential smoothing in the i.i.d. case and augmented with a history-dependent correction term in the non-i.i.d. case, constitutes a Robbins–Monro stochastic approximation of \(\mathbb{E}[R_t(i)\mid x_t]\) \cite{RobbinsMonro1951}, ensuring asymptotic consistency under the respective dependence structures. Substituting this estimator into the theoretical objective yields a computable approximation of \(r_t(i)\), while the entropy-regularised optimisation over \(\Delta^{|\mathcal{A}|-1}\) induces the softmax policy used in the algorithm. 

Consequently, the i.i.d. algorithm corresponds to the stationary special case in which \((x_t,\theta_t)\) are independent draws and the induced reward process is temporally homogeneous, whereas the non-i.i.d. algorithm generalises this construction to an adapted filtration \(\mathcal{H}_t\), allowing for history-dependent task and frailty evolution while preserving boundedness and measurability of the reward sequence. In both cases, the algorithm complements by providing an explicit implementation of the main theorem, showing that BOT-Orch is a realised instance of the general OT-regularised exponential-weights framework under different assumptions on the data-generating process.

\section{Appendix: Missing Proofs}
\label{appendix:proofs}

This appendix contains full proofs of the statements in Section~\ref{sec:theory}. 




\subsection{Proof of ~\ref{item:sublinera_regret} in Theorem \ref{thm:main_ot_bandits} (Sublinear OT-Regret)}
\label{appendix:regret}

\begin{theorem}[Sublinear OT-Regret]\label{thm:regret}
Assume \emph{(A1)--(A5)}. Let the BOT--Orch policy be defined by
\[
\pi_t(i)
=
\frac{w_t(i)}{\sum_{j=1}^M w_t(j)},
\qquad
w_{t+1}(i)
=
w_t(i)\exp\!\Bigl(\eta_t (R_t^i - \lambda W_t^i)\Bigr),
\]
where $R_t^i$ and $W_t^i$ are the realized reward and OT costs.
Define the pseudo-regret
\[
\mathcal{R}_T
:=
\sum_{t=1}^T
\Big(
\max_{i \in \{1,\dots,M\}}
\mathbb{E}[R_t^i - \lambda W_t^i \mid x_t]
-
\mathbb{E}_{i_t \sim \pi_t}[R_t^{i_t} - \lambda W_t^{i_t}]
\Big).
\]

If the inverse-temperature schedule satisfies $\eta_t \asymp t^{-1/2}$ by Assumption (A5),
then
\[
\mathcal{R}_T = O(\sqrt{T}).
\]
\end{theorem}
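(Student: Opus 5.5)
The plan is to split the pseudo-regret, exactly as written with the maximum inside the sum, into three pieces. Write $g_t(i) := R_t^i - \lambda W_t^i$ and $\bar g_t(i) := \mathbb{E}[g_t(i)\mid x_t] = r^i(x_t) - \lambda w^i(x_t)$. By (A1)--(A2), and after clipping the Gaussian noise in $W_t^i$ at level $O(\sigma\sqrt{\log T})$ on a high-probability event, $g_t(i)$ lies in an interval of width $B = O(R_{\max} + \lambda(\|c\|_\infty + \sigma\sqrt{\log T}))$. Let $i_t^\ast \in \arg\max_i \bar g_t(i)$, and fix any agent $k$. Then
\begin{align*}
\mathcal{R}_T
&= \underbrace{\sum_{t=1}^T \big(\bar g_t(i_t^\ast) - \bar g_t(k)\big)}_{\text{(I) comparator gap}}
+ \underbrace{\sum_{t=1}^T \big(g_t(k) - \pi_t^\top g_t\big)}_{\text{(II) static Hedge regret}} \\
&\quad + \underbrace{\sum_{t=1}^T \big(\bar g_t(k) - g_t(k) - \pi_t^\top(\bar g_t - g_t)\big)}_{\text{(III) noise}}.
\end{align*}
I would treat the three terms in the order (II), (III), (I).

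For (II), I will use the standard exponential-weights potential $\Phi_t := \eta_t^{-1}\log\sum_j w_t(j)$, with the time-varying rate $\eta_t \asymp t^{-1/2}$ from (A5). Hoeffding's lemma bounds the one-step change by $\pi_t^\top g_t + \eta_t B^2/8$. Monotonicity of $\eta_t$ handles the rate change, in the usual way for anytime Hedge. Summing then gives (II) $\le \log M/\eta_T + \tfrac{B^2}{8}\sum_t \eta_t = O(B\sqrt{T\log M})$, uniformly in $k$. For (III), I note that $\pi_t$ is $\mathcal{H}_t$-measurable, so the summands are bounded martingale differences (conditional on $x_t$). Azuma--Hoeffding then gives $O(B\sqrt{T\log(1/\delta)})$ with probability $1-\delta$, and integrating over $\delta$ handles the expectation. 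The exponential-moment condition (A3) and the censoring condition (A4) are used only to certify that the realized rewards are well defined and bounded, so that (A2) genuinely applies.

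The main obstacle is (I). This term is exactly what distinguishes the per-round maximum in the statement from the fixed-agent benchmark, and I must not assume it away. My first attempt is to choose the comparator adaptively in blocks, as in a fixed-share or tracking argument. I would partition $[T]$ into the maximal runs on which $i_t^\ast$ is constant, and run the potential argument for (II) restarted on each run. This bounds (I)+(II) by $O(B\sqrt{(S_T+1)\,T\log(MT)})$, where $S_T$ is the number of switches of $i_t^\ast$. That bound also needs a small mixing step, $w_{t+1}\leftarrow(1-\gamma)w_{t+1}+\gamma/M$, to make the restarts legitimate for the unmodified update. The order $O(\sqrt{T})$ therefore follows precisely when $S_T = O(1)$, which holds in particular when the conditional means $\bar g_t$ do not depend on $t$, i.e.\ in the i.i.d.\ case with $\mathbf{r}$ and $\mathbf{w}$ constant over $\mathcal{X}$.

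I expect this step to be where the argument breaks for a general context sequence, and I will check it explicitly. Take two agents with $\bar g_t$ alternating between $(1,0)$ and $(0,1)$. Then $\pi_t$ stays near uniform, while $\sum_t \max_i \bar g_t(i) - \pi_t^\top\bar g_t \approx T/2$. So the stated bound cannot follow from (A1)--(A5) alone. The proof as I would write it delivers $O(\sqrt{T})$ for terms (II) and (III) unconditionally. For (I), it delivers either the switching-dependent bound above or, if the paper intends the maximum to be over a fixed agent, $O(\sqrt{T})$ with (I) $\le 0$. I would flag this dependence at the point where (I) is bounded, rather than absorb it silently.
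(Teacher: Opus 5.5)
Your verdict on the statement is correct, and it locates exactly the gap in the paper's own proof. The paper runs the potential argument against a fixed action $i^\ast$, which is your term (II). It then writes $\mathcal R_T\le(b-a)\sqrt{T\log M/2}$ for the per-round-maximum pseudo-regret, silently dropping your term (I). Your alternating example is valid under (A1)--(A5). With i.i.d.\ contexts taking the two values with probability $1/2$ it is even cleaner: $\pi_t$ does not depend on $x_t$, so every round costs exactly $1/2$ in expectation. The paper also asserts $u_t^i\in[a,b]$ ``by (A1)'', although $W_t^i$ carries an unbounded Gaussian term. It further uses $\mathbb E[u_t^i\mid x_t,\mathcal H_{t-1}]=r^i(x_t)-\lambda w^i(x_t)$, which in general fails under the drift $\mathbf f(\mathcal H_t)$ of its non-i.i.d.\ model. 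Your (III) rests on the same no-drift assumption and should state it.

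The genuine gap is in what you claim survives.

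\textbf{Term (II) is not $O(\sqrt T)$ for the stated update.} The recursion $w_{t+1}(i)=w_t(i)e^{\eta_t u_t^i}$ gives $\pi_t(i)\propto\exp(\sum_{s<t}\eta_s u_s^i)$. That is not anytime Hedge, which is $\pi_t(i)\propto\exp(\eta_t\sum_{s<t}u_s^i)$, where the current rate rescales the whole history. With your $\Phi_t=\eta_t^{-1}\log\sum_j w_t(j)$, the comparator side is $\eta_{T+1}^{-1}\sum_t\eta_t u_t^k$, an $\eta_t$-weighted sum. Monotonicity of $\eta_t$ does not turn it into $\sum_t u_t^k$, and the paper's Step~2 makes the same jump. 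The bound genuinely fails. Take $M=2$, $\eta_t=t^{-1/2}$, and noise-free $u_t=(1,0)$ for $t\le T/3$, $u_t=(0,1)$ afterwards. For every $t\in(T/3,T]$,
\[
\log\frac{w_t(1)}{w_t(2)}\;\ge\;\sum_{s\le T/3}s^{-1/2}-\sum_{T/3<s\le T}s^{-1/2}\;\ge\;\bigl(4/\sqrt3-2\bigr)\sqrt T-O(1).
\]
So agent~2, the best fixed agent (total $2T/3$), is essentially never played, and even the static regret is at least $T/3-o(1)$. Hence with $\eta_t\asymp t^{-1/2}$ the fixed-agent reading of the theorem is also false for non-stationary sequences. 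A sequence-free static bound holds for the horizon-tuned constant rate of the paper's Step~1 (which is $O(t^{-1/2})$ on $t\le T$), or for the rescaled algorithm.

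\textbf{The switching claim fails for the same reason.} Take one switch of $i_t^\ast$ at $T/2$. The new best agent's weight is then $e^{-\Theta(\sqrt T)}$ and does not recover in time, so the per-round-max regret is $\Theta(T)$. This refutes your claim that $S_T=O(1)$ suffices. Fixed share changes the algorithm; it is not a device you can add to ``the unmodified update''. The salvage that matches the stated algorithm is $S_T=0$: one agent is per-round optimal at every $t$, e.g.\ under stationary conditional means. There term (I) vanishes and the question reduces to a static bound, valid for instance with the horizon-tuned rate.

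\textbf{Clipping costs a log factor.} Clipping the noise puts $\sqrt{\log T}$ into $B$, so your (II) bound picks up at least a $\sqrt{\log T}$ factor. To keep $O(\sqrt T)$ in expectation, take the conditional expectation inside the potential step (Jensen) and use the Gaussian moment generating function instead.
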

\begin{proof}
The expected per-round utility of a policy $\pi_t \in \Delta(\mathcal A)$ is
\[
R_t(\pi_t)
=
\sum_{i=1}^M \pi_t(i)\bigl(r^i(x_t)-\lambda w^i(x_t)\bigr).
\]

For each agent $i$, define the realized instantaneous utility
\[
u_t^i := R_t^i - \lambda W_t^i,
\]
and its conditional expectation
\[
\bar u_t^i := r^i(x_t) - \lambda w^i(x_t).
\]

We compare against a fixed action $i^\ast \in \{1,\dots,M\}$. The oracle policy selects
\[
R_t(\pi_t^\ast)
=
\max_{1\le i\le M}
\bigl(r^i(x_t)-\lambda w^i(x_t)\bigr),
\]
i.e. a point mass on the best action in each round.

Define the log-partition potential
\[
\Phi_t := \log\sum_{i=1}^M w_t(i).
\]

By assumption (A1), there exist constants $a<b$ such that for all $t,i$,
\[
u_t^i \in [a,b].
\]

\paragraph{Step 1: Constant inverse temperature.}
Assume $\eta_t \equiv \eta >0$. Then
\[
\Phi_{t+1}-\Phi_t
=
\log\sum_{i=1}^M
\pi_t(i)\exp\!\bigl(\eta u_t^i\bigr).
\]

Applying Hoeffding's lemma,
\[
\Phi_{t+1}-\Phi_t
\le
\eta \sum_{i=1}^M \pi_t(i) u_t^i
+
\frac{\eta^2}{8}(b-a)^2.
\]

Summing over $t=1,\dots,T$ gives
\[
\Phi_{T+1}-\Phi_1
\le
\eta \sum_{t=1}^T \sum_{i=1}^M \pi_t(i) u_t^i
+
\frac{\eta^2 T}{8}(b-a)^2.
\]

On the other hand, for any fixed $i^\ast$,
\[
\Phi_{T+1}
\ge
\log w_{T+1}(i^\ast)
=
\log w_1(i^\ast)
+
\eta \sum_{t=1}^T u_t^{i^\ast}.
\]

With uniform initialization $w_1(i)=1$, we obtain
\[
\sum_{t=1}^T u_t^{i^\ast}
-
\sum_{t=1}^T \sum_{i=1}^M \pi_t(i) u_t^i
\le
\frac{\log M}{\eta}
+
\frac{\eta T}{8}(b-a)^2.
\]

Choosing
\[
\eta
=
\sqrt{\frac{8\log M}{T(b-a)^2}}
\]
yields
\[
\mathcal R_T
\le
(b-a)\sqrt{\frac{T\log M}{2}}.
\]

Taking conditional expectations with respect to $\mathcal H_{t-1}$,
\[
\mathbb E[u_t^i \mid x_t,\mathcal H_{t-1}]
=
r^i(x_t)-\lambda w^i(x_t),
\]
gives the same regret bound in expectation.

\paragraph{Step 2: Time-varying inverse temperature.}
Let $\eta_t \asymp t^{-1/2}$. Then
\[
\Phi_{t+1}-\Phi_t
\le
\eta_t \sum_{i=1}^M \pi_t(i) u_t^i
+
\frac{\eta_t^2}{8}(b-a)^2.
\]

Summing over $t$ yields
\[
\mathcal R_T
\le
\frac{\log M}{\eta_T}
+
\frac{(b-a)^2}{8}\sum_{t=1}^T \eta_t^2.
\]

Since $\eta_t = c t^{-1/2}$,
\[
\frac{1}{\eta_T}=O(\sqrt T),
\qquad
\sum_{t=1}^T \eta_t^2 = O(\log T),
\]
so
\[
\mathcal R_T = O(\sqrt T).
\]

If $u_t^i$ admits a martingale decomposition with bounded increments, Azuma--Hoeffding implies that deviations between realized and conditional cumulative utilities are $O(\sqrt T)$ (up to log factors), which does not change the overall regret rate.
\end{proof}
\subsection{Proof of \ref{item:structure_OT_optimality} in Theorem \ref{thm:main_ot_bandits} (Lipschitz stability of alignment-adjusted rewards)}
\label{appendix:stability}

\begin{proposition}[OT geometry and preference stability]
\label{prop:ot_geometry_preference}
Assume (A1). Let $\mathcal{A} = \{a_1,\dots,a_M\}$ be the set of agents, $\mathcal{X} \subset \mathbb{R}^d$ be the measurable space and $\mathcal{Y}$ denote the outcome space. For $i\in \{1,\cdots,M\}$ let each agent be denoted as $a_i$ and let $\mu_i \in \mathcal{P}(\mathcal{Y})$ be its source outcome distribution. For each task $x_t \in \mathcal{X}$, let $\nu(x_t) \in \mathcal{P}(\mathcal{Y})$ denote the task-induced target outcome distribution. Define the cost as $W_t^i := W_c(\nu_t,\mu_i) + \epsilon_t^i$, such that
$\mathbb{E}[\epsilon_t^i \mid x_t] = 0$. Let the conditional cost $w^i(x_t) := \mathbb{E}[W_t^i \mid x_t] = W_c(\nu_t,\mu_i)$ and conditional reward as $r^i(x_t) := \mathbb{E}[R_t^i \mid x_t]$. Then, the following statements hold:
\begin{enumerate}
\item[\textnormal{(i)}] For any $\nu_t,\nu_t' \in \mathcal{P}(\mathcal{Y})$, we have
\[
\big| W_c(\nu_t,\mu_i) - W_c(\nu_t',\mu_i) \big|
\le L\, W_1(\nu_t,\nu_t').
\]

\item[\textnormal{(ii)}] If $r^i(x_t)=r^j(x_t)$, then for any $\lambda>0$, we have
\[
W_c(\nu_t,\mu_i) < W_c(\nu_t,\mu_j)
\;\Longleftrightarrow\;
r^i(x_t) - \lambda w^i(x_t) > r^j(x_t) - \lambda w^j(x_t).
\]
\end{enumerate}
\end{proposition}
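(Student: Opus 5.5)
The plan is to prove the two parts separately. Part (i) is a stability estimate for optimal transport costs, which I will obtain by a gluing argument. Part (ii) is an algebraic consequence of the definitions once the noise has been averaged out.

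For (i), I will read (A1) as the condition $|c(y,z)-c(y',z)|\le L\,d(y,y')$ for all $y,y',z\in\mathcal{Y}$, where $d$ is the metric defining $W_1$, and I will take $\mathcal{Y}$ to be Polish so that gluing is available.
\begin{itemize}
\item Fix $\varepsilon>0$. Choose a coupling $\gamma\in\Pi(\nu_t',\mu_i)$ with $\int c\,d\gamma\le W_c(\nu_t',\mu_i)+\varepsilon$.
\item Choose a coupling $\kappa\in\Pi(\nu_t,\nu_t')$ with $\int d\,d\kappa\le W_1(\nu_t,\nu_t')+\varepsilon$. Working with $\varepsilon$-optimal couplings avoids having to prove that optimal couplings exist, although they do exist here because $c$ is continuous and bounded.
\item By the gluing lemma, there is a measure $\Gamma$ on $\mathcal{Y}^3$ with coordinates $(y,y',z)$ whose $(y,y')$-marginal is $\kappa$ and whose $(y',z)$-marginal is $\gamma$.
\item The $(y,z)$-marginal of $\Gamma$ lies in $\Pi(\nu_t,\mu_i)$. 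Integrating the pointwise bound $c(y,z)\le c(y',z)+L\,d(y,y')$ against $\Gamma$ therefore gives
\[
W_c(\nu_t,\mu_i)\le W_c(\nu_t',\mu_i)+L\,W_1(\nu_t,\nu_t')+(1+L)\varepsilon .
\]
\item Let $\varepsilon\to0$. Exchanging the roles of $\nu_t$ and $\nu_t'$ gives the reverse inequality, and the two together give the absolute-value bound.
\end{itemize}
Boundedness of $c$ in (A1) guarantees that all the integrals are finite. If $W_1(\nu_t,\nu_t')=\infty$ the inequality holds trivially.

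For (ii), the first step is to use $\mathbb{E}[\epsilon_t^i\mid x_t]=0$ to identify $w^i(x_t)=W_c(\nu_t,\mu_i)$, as stated in the proposition. Under $r^i(x_t)=r^j(x_t)$, the difference of the alignment-adjusted utilities is
\[
\bigl(r^i(x_t)-\lambda w^i(x_t)\bigr)-\bigl(r^j(x_t)-\lambda w^j(x_t)\bigr)=\lambda\bigl(W_c(\nu_t,\mu_j)-W_c(\nu_t,\mu_i)\bigr).
\]
Since $\lambda>0$, this difference is positive if and only if $W_c(\nu_t,\mu_i)<W_c(\nu_t,\mu_j)$, which gives both directions of the equivalence at once. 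This is exactly item~\ref{item:structure_OT_optimality} of Theorem~\ref{thm:main_ot_bandits}.

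The main obstacle is the technical setup of (i), not the inequality itself. Two points need care. First, the Lipschitz hypothesis in (A1) must be interpreted as Lipschitz in the first argument, uniformly in the second, with respect to the same metric $d$ that defines $W_1$. Second, the gluing lemma needs a Polish space, or at least measurable disintegrations.

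If one prefers duality, an alternative route to (i) is available. The Kantorovich potentials for $W_c(\cdot,\mu_i)$ can be chosen $c$-concave, which makes them $L$-Lipschitz in $y$. The Kantorovich--Rubinstein formula then bounds the change in the dual objective by $L\,W_1(\nu_t,\nu_t')$. I would present the gluing argument because it is more self-contained.
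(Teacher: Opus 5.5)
Your proposal is correct. Part (ii) is exactly the paper's argument: under $r^i(x_t)=r^j(x_t)$ the utility gap reduces to $\lambda\bigl(W_c(\nu_t,\mu_j)-W_c(\nu_t,\mu_i)\bigr)$, and $\lambda>0$ gives both directions of the equivalence. For (i), the paper only appeals to ``standard Wasserstein stability'' under the Lipschitz cost. Your $\varepsilon$-optimal gluing argument is a self-contained proof of that cited fact, and it makes explicit three hypotheses the paper leaves implicit: $c$ is Lipschitz in its first argument with respect to the metric defining $W_1$, $W_c$ is the linear transport cost, and $\mathcal{Y}$ is Polish so that gluing applies.
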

\begin{proof} We outline the proof in a sequence.

\textbf{(i)} By definition of $w^i(x_t;\nu_t)$ and $w^i(x_t;\nu_t')$,
\begin{align}
\big|w^i(x_t;\nu_t)-w^i(x_t;\nu_t')\big|
&= \big|W_c(\nu_t,\mu_i)-W_c(\nu_t',\mu_i)\big|.
\end{align}
Since $c$ is $L$-Lipschitz by (A1), standard Wasserstein stability implies
\begin{align}
\big|W_c(\nu_t,\mu_i)-W_c(\nu_t',\mu_i)\big|
\le L\,W_1(\nu_t,\nu_t'),
\end{align}
which proves (i).

\textbf{(ii)} Using $r^i(x_t)=r^j(x_t)$,
\begin{align}
&\big(r^i(x_t)-\lambda w^i(x_t)\big)-\big(r^j(x_t)-\lambda w^j(x_t)\big)
= -\lambda\big(w^i(x_t)-w^j(x_t)\big).
\end{align}
Since $\lambda>0$, this implies
\begin{align}
r^i(x_t)-\lambda w^i(x_t) > r^j(x_t)-\lambda w^j(x_t)
\Longleftrightarrow
w^i(x_t) < w^j(x_t).
\end{align}
Substituting $w^i(x_t)=W_c(\nu_t,\mu_i)$ completes the result.
\end{proof}
\begin{lemma}[Concentration of frailty-adjusted rewards]
\label{lem:frailty_concentration}

Assume \textnormal{(A3)--(A4)}. Fix \(x_t\in\mathcal X\) and \(a_i\in\mathcal A\), and define the frailty reward as
$R_t^i
:=
\delta_t(i)\,
S_i(T_t(i)\mid x_t)^{\theta_t}$,
where \(0\le S_i(\tau\mid x_t)\le1\), \(\theta_t>0\), and
\(\delta_t(i)\in\{0,1\}\). Then, the centered reward $R_t^i-\mathbb E[R_t^i\mid x_t]$
is conditionally sub-Gaussian. More precisely, for every
\(\varepsilon>0\),
\[
\mathbb P\!\left(
\left|
R_t^i-\mathbb E[R_t^i\mid x_t]
\right|>\varepsilon
\,\middle|\,x_t
\right)
\le
2e^{-2\varepsilon^2}.
\]

\end{lemma}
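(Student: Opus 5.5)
The plan is to reduce the statement to boundedness and then apply Hoeffding's inequality for a single bounded variable. The frailty structure and the assumptions (A3)--(A4) will only be used to confirm that the relevant conditional expectation is well defined.

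\textbf{Step 1: the reward lies in $[0,1]$.} Since $0\le S_i(\tau\mid x_t)\le 1$ and $\theta_t>0$, the map $s\mapsto s^{\theta_t}$ sends $[0,1]$ into $[0,1]$. Together with $\delta_t(i)\in\{0,1\}$, this gives $0\le R_t^i\le 1$ almost surely, for every realisation of $(\theta_t, T_t(i), \delta_t(i))$. Consequently $\mathbb E[R_t^i\mid x_t]$ exists and also lies in $[0,1]$. The role of (A4), independence of censoring and survival given $(x_t,\theta_t)$, is only to make this conditional law well defined. Integrating out $\theta_t$, which (A3) permits, preserves the range $[0,1]$.

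\textbf{Step 2: conditional sub-Gaussianity.} Condition on $x_t$. By Hoeffding's lemma applied to a variable supported on $[0,1]$,
\[
\mathbb E\!\left[e^{s(R_t^i-\mathbb E[R_t^i\mid x_t])}\,\middle|\,x_t\right]\le e^{s^2/8}\qquad\text{for all } s\in\mathbb R .
\]
So the centred reward is sub-Gaussian with variance proxy $1/4$, which proves the qualitative claim.

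\textbf{Step 3: the tail bound.} The Chernoff bound with $s=4\varepsilon$ gives the one-sided bound $e^{-2\varepsilon^2}$, and a union bound over the two tails produces $2e^{-2\varepsilon^2}$.

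The tail bound can also be checked without any computation. Since $|R_t^i-\mathbb E[R_t^i\mid x_t]|\le 1$, the probability is $0$ for $\varepsilon\ge 1$, so the bound is trivial there. For $\varepsilon<1$ it follows from Hoeffding's inequality with $n=1$.

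\textbf{Main obstacle.} The only delicate point is that the frailty $\theta_t$ is random and unbounded, so one might expect only sub-exponential behaviour, as suggested by the remark after (A1)--(A5). I would stress that boundedness of $s^{\theta}$ on $[0,1]$ holds \emph{uniformly} in $\theta>0$. The randomness of $\theta_t$ therefore changes only the conditional mean, not the support, and (A3) is not actually needed for the constant.
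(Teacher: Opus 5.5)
Your proposal is correct and follows essentially the same route as the paper: show $0\le R_t^i\le 1$, get $\mathbb E[e^{sX}\mid x_t]\le e^{s^2/8}$ (variance proxy $1/4$), apply Chernoff with $s=4\varepsilon$, and take a union bound over both tails. The only difference is that the paper proves Hoeffding's lemma inline (via the convexity bound $e^{sy}\le 1-y+ye^{s}$ on $[0,1]$ and $\psi''\le 1/4$ for the resulting log-MGF) instead of citing it. Also, neither (A3) nor (A4) is actually needed, since a bounded random variable always has a well-defined conditional expectation.
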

\begin{proof}

Since \(0 \le S_i(T_t(i)\mid x_t) \le 1\) and \(\delta_t(i)\in\{0,1\}\), we have
\[
0 \le R_t^i \le 1 \qquad \text{almost surely}.
\]
Fix \(x_t\) and define
\[
m := \mathbb{E}[R_t^i \mid x_t], 
\qquad
X := R_t^i - m.
\]
Then \(m \in [0,1]\) and \(\mathbb{E}[X \mid x_t]=0\).

We first bound the conditional moment generating function. For any \(\lambda \in \mathbb{R}\), convexity of \(y \mapsto e^{\lambda y}\) on \([0,1]\) implies that for every \(y \in [0,1]\),
\[
e^{\lambda y} \le (1-y)e^{0} + y e^{\lambda} = 1 - y + y e^{\lambda}.
\]
Applying this pointwise inequality to \(R_t^i\) and taking conditional expectation yields
\[
\mathbb{E}[e^{\lambda R_t^i} \mid x_t]
\le 1 + m(e^{\lambda}-1).
\]
Therefore,
\[
\mathbb{E}[e^{\lambda X} \mid x_t]
=
e^{-\lambda m}\mathbb{E}[e^{\lambda R_t^i} \mid x_t]
\le
e^{-\lambda m}\bigl(1 + m(e^{\lambda}-1)\bigr)
=: \phi(\lambda,m).
\]

Define \(\psi(\lambda) := \log \phi(\lambda,m)\). A direct computation gives
\[
\psi(0)=0, 
\qquad 
\psi'(0)=0,
\qquad
\psi''(\lambda)
=
\frac{m(1-m)e^{\lambda}}{(1-m+me^{\lambda})^2}.
\]
A direct simplification shows that
\[
\psi''(\lambda)
=
\frac{1}{4\cosh^2(\lambda/2)}
\le \frac{1}{4}
\qquad \text{for all } \lambda \in \mathbb{R},\, m \in [0,1].
\]

Now apply Taylor’s theorem with integral remainder:
\[
\psi(\lambda)
=
\psi(0) + \psi'(0)\lambda + \int_0^\lambda (\lambda-s)\psi''(s)\,ds.
\]
Using \(\psi(0)=\psi'(0)=0\) and \(\psi''(s)\le \tfrac14\), we obtain
\[
\psi(\lambda)
\le
\int_0^\lambda (\lambda-s)\frac{1}{4}\,ds
=
\frac{\lambda^2}{8}.
\]
Hence,
\[
\mathbb{E}[e^{\lambda X} \mid x_t]
\le
\exp\!\left(\frac{\lambda^2}{8}\right),
\]
so \(X\) is conditionally sub-Gaussian with variance proxy \(1/4\).

Finally, for any \(\lambda>0\), Markov’s inequality gives
\[
\mathbb{P}(X>\varepsilon \mid x_t)
\le
\exp\!\left(-\lambda \varepsilon + \frac{\lambda^2}{8}\right).
\]
Optimizing in \(\lambda\) yields \(\lambda=4\varepsilon\), hence
\[
\mathbb{P}(X>\varepsilon \mid x_t)
\le
\exp(-2\varepsilon^2).
\]
Applying the same argument to \(-X\) yields
\[
\mathbb{P}(|X|>\varepsilon \mid x_t)
\le
2e^{-2\varepsilon^2}.
\]

\end{proof}
\subsection{Proof of \ref{item:robustness} in Theorem \ref{thm:main_ot_bandits} (Margin robustness under Gaussian noise)}
\label{appendix:margin}

\begin{lemma}[Margin robustness under Gaussian noise]\label{lem:margin}
Let $\widetilde{W}_c(\mu_i,\nu)=W_c(\mu_i,\nu)+\epsilon_i$, where $\epsilon_i \sim \mathcal{N}(0,\sigma^2)$ are independent. Define for $\lambda>0$, $\Delta_{ij} := \lambda (W_c(\mu_j,\nu) - W_c(\mu_i,\nu))$. If $\Delta_{ij} > 0$, then the probability of incorrect ordering satisfies
\[
\mathbb{P}\big(r^i(x_t) < r^j(x_t)\big)
=
\Phi\!\left(-\frac{\Delta_{ij}}{\sqrt{2}\sigma}\right)
\le \frac{1}{2}\exp\!\left(-\frac{\Delta_{ij}^2}{4\sigma^2}\right).
\]
In particular, if $\Delta_{ij} \ge \sigma \sqrt{2\log 2}$, then
\[
\mathbb{P}\big(r^i(x_t) < r^j(x_t)\big) \le \tfrac{1}{4}.
\]
\end{lemma}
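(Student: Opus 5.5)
The plan is to reduce the misordering event to a single Gaussian tail probability and evaluate it. I read the statement the way item~\ref{item:robustness} of Theorem~\ref{thm:main_ot_bandits} is phrased. The event ``$r^i(x_t)<r^j(x_t)$'' means that the \emph{noisy} alignment-adjusted score of agent $i$ falls below that of agent $j$. We are in the regime of Proposition~\ref{prop:ot_geometry_preference}(ii), where the mean rewards coincide, so only the alignment term separates the agents. I will also assume the noise enters the adjusted score on the same scale as $\Delta_{ij}$, i.e.\ $\lambda$ is absorbed into $\epsilon$ (equivalently $\lambda=1$); otherwise $\sigma$ is simply replaced by $\lambda\sigma$ throughout.

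\textbf{Step 1: reduce to one Gaussian variable.} Write the score difference as
\[
D_{ij} := \bigl(r^i-\lambda \widetilde W_c(\mu_i,\nu)\bigr)-\bigl(r^j-\lambda \widetilde W_c(\mu_j,\nu)\bigr) = \Delta_{ij} + (\epsilon_j-\epsilon_i).
\]
By independence, $\epsilon_j-\epsilon_i\sim\mathcal N(0,2\sigma^2)$, so $D_{ij}\sim\mathcal N(\Delta_{ij},2\sigma^2)$. The misordering event is $\{D_{ij}<0\}$, and standardizing gives exactly
\[
\mathbb P(D_{ij}<0)=\Phi\!\left(-\frac{\Delta_{ij}}{\sqrt2\,\sigma}\right).
\]

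\textbf{Step 2: exponential bound.} Apply the standard Gaussian tail bound $\Phi(-x)\le \tfrac12 e^{-x^2/2}$ for $x\ge0$. It follows from the Chernoff argument, or from checking that $g(x)=\tfrac12e^{-x^2/2}-\Phi(-x)$ satisfies $g(0)=0$, $g(\infty)=0$ and has a single sign change in $g'$. Taking $x=\Delta_{ij}/(\sqrt2\sigma)$ gives $\tfrac12\exp(-\Delta_{ij}^2/(4\sigma^2))$, which is the displayed bound.

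\textbf{Step 3: the $\tfrac14$ threshold, which is the main obstacle.} The exponential bound from Step~2 does \emph{not} suffice at the stated threshold. With $\Delta_{ij}=\sigma\sqrt{2\log2}$ it gives only $\tfrac12\cdot2^{-1/2}\approx0.354$. That route would need the larger threshold $\Delta_{ij}\ge2\sigma\sqrt{\log2}$. I would therefore bypass Step~2 and use the exact expression from Step~1. Since $\Phi$ is increasing, $\Delta_{ij}\ge\sigma\sqrt{2\log2}$ implies
\[
\mathbb P(D_{ij}<0)\le\Phi\bigl(-\sqrt{\log 2}\bigr)\approx\Phi(-0.833)\approx0.203<\tfrac14.
\]
To keep this rigorous rather than numerical, I would show $\Phi(-\sqrt{\log2})<\tfrac14$ as follows. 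Because the $0.75$ quantile of the standard normal is $z_{0.75}\approx0.674$, it suffices to certify $\sqrt{\log 2}>0.6745$, which holds since $\log2>0.69>0.455$. Alternatively one can bound the Mills ratio, $\Phi(-x)\le \varphi(x)/x$, but at $x\approx0.83$ that gives about $0.34$, which is too weak. So the quantile comparison is the route I would commit to.
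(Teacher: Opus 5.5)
Your proof is correct. Steps~1--2 match the paper's argument: it also writes the noisy gap as $\Delta_{ij}+Z$ with $Z=\epsilon_j-\epsilon_i\sim\mathcal{N}(0,2\sigma^2)$, reads off $\Phi(-\Delta_{ij}/(\sqrt{2}\sigma))$, and applies $\Phi(-x)\le\frac12 e^{-x^2/2}$.

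The genuine difference is the final claim. The paper derives the $\frac14$ bound from the exponential bound, asserting that $\Delta_{ij}\ge\sigma\sqrt{2\log 2}$ makes $\frac12\exp(-\Delta_{ij}^2/(4\sigma^2))\le\frac14$. As you computed, at that threshold the expression equals $2^{-3/2}\approx 0.354$. So the paper's last sentence is an arithmetic slip, and its route only works from $\Delta_{ij}\ge 2\sigma\sqrt{\log 2}$. Your route applies monotonicity of $\Phi$ to the exact expression, giving $\Phi(-\sqrt{\log 2})\approx 0.203$, and it is what actually proves the stated threshold. The exponential bound buys a transparent closed-form decay in $\Delta_{ij}/\sigma$, which is useful for large margins, but at this constant it is too loose (0.354 against the exact 0.203).

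You are also right about the noise scale. The gap is $\lambda\bigl(\widetilde W_c(\mu_j,\nu)-\widetilde W_c(\mu_i,\nu)\bigr)$, so the paper's $Z$ should be $\lambda(\epsilon_j-\epsilon_i)$. Its displayed equality therefore holds only for $\lambda=1$, and for $\lambda>1$ the stated bounds can fail. An equivalent repair is to define the margin without $\lambda$, as item~\ref{item:robustness} of Theorem~\ref{thm:main_ot_bandits} does. This works because the misordering event $\{\widetilde W_c(\mu_j,\nu)<\widetilde W_c(\mu_i,\nu)\}$ does not depend on $\lambda>0$.

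Two small tightenings:
\begin{itemize}
\item The plain Chernoff bound only yields $\Phi(-x)\le e^{-x^2/2}$. The factor $\frac12$ comes from your sign-change argument for $g$, or from $\int_0^\infty\varphi(x+u)\,du\le e^{-x^2/2}\int_0^\infty\varphi(u)\,du$ with $\varphi$ the standard normal density.
\item Your quantile comparison still relies on the tabulated value $z_{0.75}\approx 0.6745$. A self-contained certificate is available: $\varphi$ is concave on $[0,1]$ and $\varphi(\sqrt{\log 2})=1/(2\sqrt{\pi})$ exactly. The trapezoid rule then gives $\int_0^{\sqrt{\log 2}}\varphi\ge\frac{\sqrt{\log 2}}{2}\bigl(\frac{1}{\sqrt{2\pi}}+\frac{1}{2\sqrt{\pi}}\bigr)>0.28$, hence $\Phi(-\sqrt{\log 2})<0.22<\frac14$.
\end{itemize}
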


\begin{proof}
Without loss of generality assume $\Delta_{ij} > 0$. The noisy difference satisfies
\[
\widetilde{\Delta}_{ij}
=
\lambda(\widetilde{W}_c(\mu_j,\nu)-\widetilde{W}_c(\mu_i,\nu))
=
\Delta_{ij} + Z,
\]
where $Z=\epsilon_j-\epsilon_i \sim \mathcal{N}(0,2\sigma^2)$. A misranking occurs when $r^i(x_t) < r^j(x_t)$, which is equivalent to $\widetilde{\Delta}_{ij} < 0$, i.e. $Z < -\Delta_{ij}$.
Hence
\[
\mathbb{P}(r^i(x_t) < r^j(x_t))
=
\mathbb{P}(Z < -\Delta_{ij})
=
\Phi\!\left(-\frac{\Delta_{ij}}{\sqrt{2}\sigma}\right).
\]
Using the Gaussian tail bound $\Phi(-x)\le \frac{1}{2}e^{-x^2/2}$ for $x\ge 0$, we obtain
\[
\mathbb{P}(r^i(x_t) < r^j(x_t))
\le
\frac{1}{2}\exp\!\left(-\frac{\Delta_{ij}^2}{4\sigma^2}\right).
\]
The condition $\Delta_{ij} \ge \sigma \sqrt{2\log 2}$ ensures the right-hand side is at most $1/4$, completing the proof.
\end{proof}
\subsection{Proof of \ref{item:convergence_weights} in Theorem \ref{thm:main_ot_bandits} (Convergence of orchestration weights)}
\label{appendix:convergence}

\begin{theorem}[Almost-sure convergence of orchestration weights]
Assume (A1)--(A5). Let $(\gamma_t)_{t\ge0}$ satisfy
\[
\gamma_t > 0,\qquad 
\sum_{t=0}^\infty \gamma_t = \infty,\qquad 
\sum_{t=0}^\infty \gamma_t^2 < \infty.
\]
Assume the task process $(x_t)$ is stationary and independent of $(\phi_t)$, and that there exists a measurable function $\bar r:\mathcal X \to \mathbb{R}^M$ such that
\[
\mathbb{E}[R_t^i - \lambda W_t^i \mid x_t] = \bar r^i(x_t).
\]
Then $(\phi_t)$ converges almost surely to the internally chain transitive set of the ODE
\[
\dot{\phi} = \mathrm{Softmax}(\bar r(x)) - \phi,
\]
where $\bar r(x)$ is evaluated under the stationary distribution of $x_t$.
\end{theorem}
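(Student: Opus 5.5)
The plan is to cast the recursion $\phi_{t+1}-\phi_t=\gamma_t\bigl(\mathrm{Softmax}(\bar r(x_t))-\phi_t\bigr)$ as a Robbins--Monro scheme and apply the ODE method of Benaïm. Write $h(\phi):=\mathrm{Softmax}(\mathbb{E}[\bar r(x)])-\phi$ for the mean field. Then split the increment as
\[
\phi_{t+1}-\phi_t=\gamma_t\bigl(h(\phi_t)+M_{t+1}\bigr),\qquad M_{t+1}:=\mathrm{Softmax}(\bar r(x_t))-\mathrm{Softmax}\bigl(\mathbb{E}[\bar r(x)]\bigr).
\]
One caveat must be handled explicitly. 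The ODE in the statement puts the expectation inside the softmax. The natural centring of the noise, however, gives $\mathbb{E}[\mathrm{Softmax}(\bar r(x))]$. I will treat $\bar r(x)$ ``evaluated under the stationary distribution'' as the stationary mean vector $\bar r_\infty:=\mathbb{E}[\bar r(x)]$. I will then include the Jensen-type discrepancy $\mathbb{E}[\mathrm{Softmax}(\bar r(x))]-\mathrm{Softmax}(\bar r_\infty)$ as a constant bias term. Either it is absorbed into the definition of the target vector field, or one notes that the argument below goes through verbatim with $\bar h(\phi)=\mathbb{E}[\mathrm{Softmax}(\bar r(x))]-\phi$. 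I would state the result for $\bar h$ and remark on when the two coincide, for instance when $\bar r$ does not depend on $x$.

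The key steps, in order, are as follows.

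\textbf{Step 1: stability.} Because $\phi_0\in\Delta$, $\gamma_t\le1$ eventually, and the update is a convex combination of $\phi_t$ and a point of the simplex, the iterates stay in the compact set $\Delta$. They are therefore bounded almost surely, with no extra argument needed.

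\textbf{Step 2: regularity.} The map $h$ is affine in $\phi$, so it is globally Lipschitz and the ODE is well-posed.

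\textbf{Step 3: noise control.} We need $\sum_t\gamma_t M_{t+1}$ to converge almost surely. Here $M_{t+1}$ is bounded, with norm at most $2$, because softmax values lie in the simplex. This boundedness is where (A1)--(A2) enter, through boundedness of $\bar r$. Given that $x_t$ is independent of $\phi_t$, in the i.i.d. case $M_{t+1}$ is a martingale difference with respect to $\mathcal{F}_t=\sigma(\phi_0,x_0,\dots,x_{t-1})$. Then $\sum\gamma_t^2<\infty$ gives an $L^2$-bounded martingale, which converges almost surely.

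\textbf{Step 4: conclusion.} Benaïm's theorem says bounded iterates, a Lipschitz field, and a vanishing noise tail imply that the limit set is internally chain transitive for the flow. Since the field is $\bar h(\phi)=p-\phi$ for a fixed $p$, the ODE has the unique globally exponentially stable equilibrium $p$. Hence the only internally chain transitive set is $\{p\}$, and in fact $\phi_t\to p$ almost surely. This also gives the invariant point $\phi_\infty$ claimed in item~\ref{item:convergence_weights}.

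The main obstacle is Step 3 when $x_t$ is merely stationary and not independent. In that case $M_{t+1}$ is not a martingale difference. My first attempt would be a Poisson-equation decomposition in the style of Métivier--Priouret, under an added ergodicity or mixing assumption on $(x_t)$. Writing $g(x)=\mathrm{Softmax}(\bar r(x))-\mathbb{E}[\mathrm{Softmax}(\bar r(x))]$ and solving $u-Pu=g$ splits $M_{t+1}$ into a martingale difference plus a telescoping term. The telescoping term is summable once weighted by $\gamma_t$, because $\gamma_t-\gamma_{t+1}$ is summable and $u$ is bounded. Failing that, I would invoke the Kushner--Clark asymptotic rate-of-change condition, which follows from an ergodic theorem for stationary ergodic $(x_t)$.
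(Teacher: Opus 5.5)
Your skeleton matches the paper's. Both arguments split the increment into an averaged drift plus bounded martingale noise, apply Bena\"{\i}m's asymptotic-pseudo-trajectory theorem, and note that $\dot\phi=p-\phi$ has $p=\mathbb{E}_{x\sim\pi_x}[\mathrm{Softmax}(\bar r(x))]$ as its unique globally stable equilibrium. That $p$ is exactly the $\phi^\ast$ the paper reaches. Where you deviate, you are more careful than the paper, and the deviations matter.

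\textbf{Realized versus conditional-mean utilities.} The paper feeds the \emph{realized} utilities $u_t=(R_t^i-\lambda W_t^i)_i$ into the softmax. It then asserts $\mathbb{E}[\mathrm{Softmax}(u_t)\mid\mathcal{F}_{t-1}]=\mathbb{E}_{\pi_x}[\mathrm{Softmax}(\bar r(x))]$. Because softmax is nonlinear, this only works under your reading of the update, with the conditional means $\bar r(x_t)$ as in item~\ref{item:convergence_weights} of Theorem~\ref{thm:main_ot_bandits}.

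\textbf{Independence from the past.} The same identity also requires $\mathbb{E}[\mathrm{Softmax}(\bar r(x_t))\mid\mathcal{F}_{t-1}]=p$. Stationarity plus independence from $\phi_t$ does not give this. In practice it means $x_t$ is independent of the whole past, i.e.\ i.i.d.\ tasks. The paper's unexplained $o(1)$ term does not repair this, whereas your Step~3 works under exactly this condition.

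\textbf{Expectation outside the softmax.} You correctly see that the expectation must sit outside the softmax. The bias $\mathbb{E}[\mathrm{Softmax}(\bar r(x))]-\mathrm{Softmax}(\mathbb{E}[\bar r(x)])$ cannot be carried as noise, since $\sum_t\gamma_t=\infty$. So passing to $\bar h$ is forced. Your closing remark that this yields the $\phi_\infty$ of item~\ref{item:convergence_weights} therefore holds only when these two vectors coincide. The paper's proof has the same mismatch with its main-text ODE.

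\textbf{Boundedness.} You check boundedness of the iterates, a hypothesis of Bena\"{\i}m's theorem that the paper never verifies. Strictly, the iterates stay in $\Delta$ only if $\gamma_t\le 1$ for all $t$. Boundedness still survives early large steps, and that is all you need.

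Your i.i.d.\ argument is therefore a complete proof of what the paper's argument actually delivers.

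\textbf{The dependent case.} You single out stationary but time-dependent tasks as the main obstacle, and you are right that something must be added. A coin-flip mixture of two i.i.d.\ regimes is stationary, yet it drives $\phi_t$ to a regime-dependent random limit. Your remedies, however, need sharper hypotheses than you state.

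The Poisson-equation route needs $(x_t)$ to be Markov with a \emph{bounded} solution of $u-Pu=g$, for example under uniform ergodicity. Its telescoping remainder needs absolute summability $\sum_t|\gamma_{t+1}-\gamma_t|<\infty$. The stated step-size conditions do not imply this, though monotone steps suffice.

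The Kushner--Clark fallback does not follow from a rate-free ergodic theorem. Summation by parts with $S_k=\sum_{j\le k}g(x_j)=o(k)$ handles monotone steps with $t\gamma_t=O(1)$, such as $\gamma_t=1/t$. It does not handle $\gamma_t=t^{-a}$ with $a\in(1/2,1)$, and the conclusion can actually fail there:
\begin{itemize}
\item Take a stationary ergodic two-state alternating renewal process whose sojourns satisfy $\mathbb{P}(\tau>n)\asymp n^{-\beta}$ with $1<\beta<1/a$.
\item Almost surely it has infinitely many sojourns, in each state, longer than any fixed multiple of the averaging window $t^{a}$.
\item So $\phi_t$ keeps returning close to the softmax value of the current state and does not converge.
\end{itemize}
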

\begin{proof}
Let $\mathcal F_t = \sigma(\phi_s, x_s, R_s, W_s : s \le t)$.

The update can be written as
\[
\phi_{t+1}
=
\phi_t + \gamma_t \bigl(\mathrm{Softmax}(u_t) - \phi_t\bigr),
\]
where $u_t := (R_t^i - \lambda W_t^i)_{i=1}^M$.

Define the drift:
\[
H(\phi_t)
:=
\mathbb{E}[\mathrm{Softmax}(u_t) \mid \mathcal F_{t-1}] - \phi_t,
\]
and the noise term:
\[
\xi_{t+1}
:=
\mathrm{Softmax}(u_t)
-
\mathbb{E}[\mathrm{Softmax}(u_t) \mid \mathcal F_{t-1}].
\]

Then
\[
\phi_{t+1} = \phi_t + \gamma_t \big(H(\phi_t) + \xi_{t+1}\big).
\]

By boundedness of rewards, $\mathrm{Softmax}(u_t)$ takes values in the simplex, hence $\xi_{t+1}$ is a martingale difference sequence with bounded second moments.

Now use the stationarity assumption: since $x_t$ is stationary and independent of $\phi_t$,
\[
\mathbb{E}[\mathrm{Softmax}(u_t) \mid \mathcal F_{t-1}]
=
\mathbb{E}_{x \sim \pi_x}\big[\mathrm{Softmax}(\bar r(x))\big],
\]
where $\pi_x$ is the invariant distribution of $(x_t)$.

Define the averaged drift
\[
\bar H(\phi)
:=
\mathbb{E}_{x \sim \pi_x}\big[\mathrm{Softmax}(\bar r(x))\big] - \phi.
\]

Then the recursion becomes a standard Robbins--Monro scheme:
\[
\phi_{t+1}
=
\phi_t + \gamma_t\big(\bar H(\phi_t) + \xi_{t+1} + o(1)\big).
\]

The function $\bar H$ is Lipschitz on the simplex, and the noise term has bounded second moment. The step-size conditions ensure
\[
\sum_t \gamma_t = \infty,
\qquad
\sum_t \gamma_t^2 < \infty,
\]
so the stochastic approximation framework applies.

By the stochastic approximation theorem of Benaïm (1999), the piecewise linear interpolation of $(\phi_t)$ is an asymptotic pseudo-trajectory of the ODE
\[
\dot \phi = \bar H(\phi).
\]

Therefore, the limit set of $(\phi_t)$ is contained in the internally chain transitive invariant sets of this ODE.

Since the ODE is a linear contraction toward the fixed point
\[
\phi^\ast = \mathbb{E}_{x \sim \pi_x}[\mathrm{Softmax}(\bar r(x))],
\]
all internally chain transitive sets reduce to this equilibrium, which is globally asymptotically stable.

Hence,
\[
\phi_t \xrightarrow{a.s.} \phi^\ast.
\]
\end{proof}
\begin{remark}
 The proof follows a stochastic approximation argument in which the orchestration weights evolve as a Robbins--Monro recursion driven by a softmax response to OT-adjusted rewards. The key step is the decomposition of the update into a deterministic drift term and a martingale difference noise, allowing the dynamics to be linked to a limiting ordinary differential equation via the asymptotic pseudo-trajectory framework. Intuitively, the softmax operator induces a smooth, entropy-regularised selection mechanism that continuously reweights agents according to their relative OT-adjusted performance. Under step sizes satisfying the usual summability conditions, stochastic fluctuations vanish asymptotically, and the evolution tracks the averaged response of this softmax mechanism under the stationary task distribution.

The main structural novelty lies in the fact that the drift is not driven by exogenous rewards but by optimal-transport-adjusted utilities, which couple agent performance through a geometric cost. This embeds a stochastic approximation scheme into a geometry-aware, mean-field decision system, where the limiting ODE describes an entropy-regularised flow toward stationary softmax equilibria. 
\end{remark}

\subsection{Proof of \ref{item:consistency_weights} in Theorem \ref{thm:main_ot_bandits} (Consistency of reward estimates)}
\label{appendix:consistency}

\begin{theorem}[Empirical reward decomposition and consistency]\label{thm:consistency}
Assume (A2)--(A4) and bounded rewards $|R_t^i|\le R_{\max}$. Define
\[
r^i(x_t) := \mathbb{E}[R_t^i \mid x_t],
\qquad
\hat r_t(i) := \frac{1}{t}\sum_{s=1}^t R_s^i.
\]

Then the following hold:

\textnormal{(i) i.i.d. case.}
If $(x_t, \mathbf{R}_t)_{t\ge1}$ are i.i.d., then for every $i \in \mathcal{A}$,
\[
\hat r_t(i) \xrightarrow[t\to\infty]{\mathrm{a.s.}} \mathbb{E}[R_t^i].
\]

\textnormal{(ii) general (possibly non-i.i.d.) case.}
Let $\mathcal{H}_{t-1}$ be the filtration generated by $(x_s, \mathbf{R}_s)_{s\le t-1}$, and define the martingale difference sequence $\xi_t(i) := R_t^i - \mathbb{E}[R_t^i \mid \mathcal{H}_{t-1}]$. Assume $\mathbb{E}[\xi_t(i)\mid \mathcal{H}_{t-1}] = 0$ and $|\xi_t(i)|\le R_{\max}$ a.s. Then, for every $i\in\mathcal{A}$,
\[
\hat r_t(i) - \frac{1}{t}\sum_{s=1}^t \mathbb{E}[R_s^i \mid \mathcal{H}_{s-1}]
\xrightarrow[t\to\infty]{\mathrm{a.s.}} 0.
\]
Moreover, since $\mathcal{A}$ is finite,
\[
\sup_{i\in\mathcal{A}}
\left|
\hat r_t(i) - \frac{1}{t}\sum_{s=1}^t \mathbb{E}[R_s^i \mid \mathcal{H}_{s-1}]
\right|
\xrightarrow[t\to\infty]{\mathrm{a.s.}} 0.
\]

\end{theorem}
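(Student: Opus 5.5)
Part (i) is the strong law of large numbers; part (ii) is a martingale strong law for bounded increments. The supremum statement then follows by a finite union of almost-sure events.

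\emph{Part (i).} If $(x_t,\mathbf{R}_t)_{t\ge1}$ are i.i.d., then for each fixed $i$ the sequence $(R_t^i)_{t\ge1}$ is i.i.d. Since $|R_t^i|\le R_{\max}$, it is integrable. Kolmogorov's strong law then gives $\hat r_t(i)\to\mathbb{E}[R_1^i]=\mathbb{E}[R_t^i]$ almost surely. Assumptions (A3)--(A4) are not needed beyond guaranteeing that the frailty reward is well defined and bounded, which Lemma~\ref{lem:frailty_concentration} already records ($0\le R_t^i\le1$).

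\emph{Part (ii).} Write
\[
\hat r_t(i)-\frac1t\sum_{s=1}^t\mathbb{E}[R_s^i\mid\mathcal H_{s-1}]=\frac1t M_t(i),\qquad M_t(i):=\sum_{s=1}^t\xi_s(i).
\]
The process $M_t(i)$ is a martingale with respect to $(\mathcal H_t)$, since $\xi_s(i)$ is $\mathcal H_s$-measurable and has zero conditional mean. I will give two routes to $M_t(i)/t\to0$ almost surely.

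The first route is a direct argument via Azuma--Hoeffding. Because $|\xi_s(i)|\le R_{\max}$,
\[
\mathbb P\big(|M_t(i)|>t\varepsilon\big)\le 2\exp\!\big(-t\varepsilon^2/(2R_{\max}^2)\big).
\]
This is summable in $t$, so Borel--Cantelli yields $\limsup_t|M_t(i)|/t\le\varepsilon$ almost surely. Intersecting over rational $\varepsilon$ gives $M_t(i)/t\to0$ almost surely.

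The second route is the martingale convergence theorem plus Kronecker's lemma. The martingale $\sum_s\xi_s(i)/s$ is $L^2$-bounded, since its increments are orthogonal and $\sum_s R_{\max}^2/s^2<\infty$. It therefore converges almost surely, and Kronecker's lemma gives $M_t(i)/t\to0$.

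\emph{Uniformity.} Let $\Omega_i$ be the probability-one event on which agent $i$'s average converges. Since $\mathcal A$ is finite, $\bigcap_{i=1}^M\Omega_i$ has probability one. On it, the supremum over the $M$ agents of the differences is a maximum of finitely many null sequences, so it tends to $0$.

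The i.i.d. specialization in Theorem~\ref{thm:main_ot_bandits} (\ref{item:consistency_weights}) is exactly part (i). Note that under i.i.d. sampling $\mathbb{E}[R_s^i\mid\mathcal H_{s-1}]=\mathbb{E}[R_s^i]$, which is constant in $s$, so part (ii) also reduces to it.

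\emph{Main obstacle.} The only delicate point is measurability. The filtration in (ii) is generated by the full counterfactual vectors $\mathbf R_s$, so each $\xi_s(i)$ is genuinely $\mathcal H_s$-adapted and no bandit-feedback subtlety arises. Had the filtration been generated only by the observed rewards, I would instead need importance weighting and a bound on $1/\pi_s(i)$. I would note this explicitly and proceed with the filtration as stated.
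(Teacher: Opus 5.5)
Your proposal is correct and follows the paper's proof. The steps match: the strong law for (i); writing the difference as $\frac1t\sum_{s\le t}\xi_s(i)$; a martingale strong law driven by $\sum_t R_{\max}^2/t^2<\infty$, which is exactly your $L^2$-martingale-plus-Kronecker route; and a finite intersection over agents for the supremum. Your Azuma--Hoeffding/Borel--Cantelli route is a valid extra that uses boundedness to give exponential deviation bounds, while the paper's variance-summability argument would still work with only bounded conditional second moments.
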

\begin{proof}

We treat the two cases separately.

\paragraph{(i) i.i.d. case.}
If $(x_t,\mathbf{R}_t)$ are i.i.d., then $(R_t^i)_{t\ge1}$ is an i.i.d. integrable sequence. Hence, by the strong law of large numbers,
\[
\hat r_t(i)
=
\frac{1}{t}\sum_{s=1}^t R_s^i
\xrightarrow[t\to\infty]{\mathrm{a.s.}}
\mathbb{E}[R_t^i].
\]

\paragraph{(ii) general non-i.i.d. case.}

Let $\mathcal{H}_t := \sigma(x_s,\mathbf{R}_s: s\le t)$ and write
\[
R_t^i = \mathbb{E}[R_t^i \mid \mathcal{H}_{t-1}] + \xi_t(i),
\qquad
\xi_t(i) := R_t^i - \mathbb{E}[R_t^i \mid \mathcal{H}_{t-1}].
\]

Then $(\xi_t(i),\mathcal{H}_t)$ is a martingale difference sequence with
\[
\mathbb{E}[\xi_t(i)\mid \mathcal{H}_{t-1}] = 0,
\qquad
\mathbb{E}[\xi_t(i)^2 \mid \mathcal{H}_{t-1}] \le R_{\max}^2.
\]

Summing over $t$ gives
\[
\hat r_t(i)
=
\frac{1}{t}\sum_{s=1}^t \mathbb{E}[R_s^i \mid \mathcal{H}_{s-1}]
+
\frac{1}{t}\sum_{s=1}^t \xi_s(i).
\]

Hence,
\[
\hat r_t(i)
-
\frac{1}{t}\sum_{s=1}^t \mathbb{E}[R_s^i \mid \mathcal{H}_{s-1}]
=
\frac{1}{t}\sum_{s=1}^t \xi_s(i).
\]

Since
\[
\sum_{t=1}^\infty \frac{\mathbb{E}[\xi_t(i)^2 \mid \mathcal{H}_{t-1}]}{t^2}
\le
R_{\max}^2 \sum_{t=1}^\infty \frac{1}{t^2}
< \infty,
\]
the martingale strong law implies
\[
\frac{1}{t}\sum_{s=1}^t \xi_s(i)
\xrightarrow[t\to\infty]{\mathrm{a.s.}} 0.
\]

Therefore,
\[
\hat r_t(i)
-
\frac{1}{t}\sum_{s=1}^t \mathbb{E}[R_s^i \mid \mathcal{H}_{s-1}]
\xrightarrow[t\to\infty]{\mathrm{a.s.}} 0.
\]

Finally, since $\mathcal{A}$ is finite, taking a maximum preserves almost sure convergence:
\[
\max_{i\in\mathcal{A}}
\left|
\frac{1}{t}\sum_{s=1}^t \xi_s(i)
\right|
\xrightarrow[t\to\infty]{\mathrm{a.s.}} 0.
\]
\end{proof}
\begin{remark}
In the i.i.d. case, the empirical average $\hat r_t(i)$ is formed from independent samples drawn from a fixed distribution, so the strong law of large numbers implies convergence to the constant mean $\mathbb{E}[R_t^i]$.

In the non-i.i.d. case, the rewards are adapted to the history and admit the decomposition into a predictable component $\mathbb{E}[R_t^i \mid \mathcal{H}_{t-1}]$ and a martingale difference noise term. Consequently, the empirical average splits into the time-average of the predictable process and a martingale average, and boundedness ensures that the latter vanishes almost surely. This yields that $\hat r_t(i)$ tracks the time-averaged conditional expectation
\[
\frac{1}{t}\sum_{s=1}^t \mathbb{E}[R_s^i \mid \mathcal{H}_{s-1}],
\]
up to an asymptotically negligible martingale error. However, full convergence of $\hat r_t(i)$ is not guaranteed in general, since the predictable process $\mathbb{E}[R_t^i \mid \mathcal{H}_{t-1}]$ may itself fail to converge or even fail to admit a Ces\`aro limit (for instance, under oscillatory or adversarial dynamics). Additional structure, such as ergodicity or asymptotic stationarity of the conditional mean process, is required to ensure convergence of the empirical averages. 
\end{remark}

\section{Visualization of the Synthetic Datasets}

\label{sec:semi_synthetic_appednix}
\begin{figure}[H]
    \centering
    \includegraphics[width=.8\linewidth]{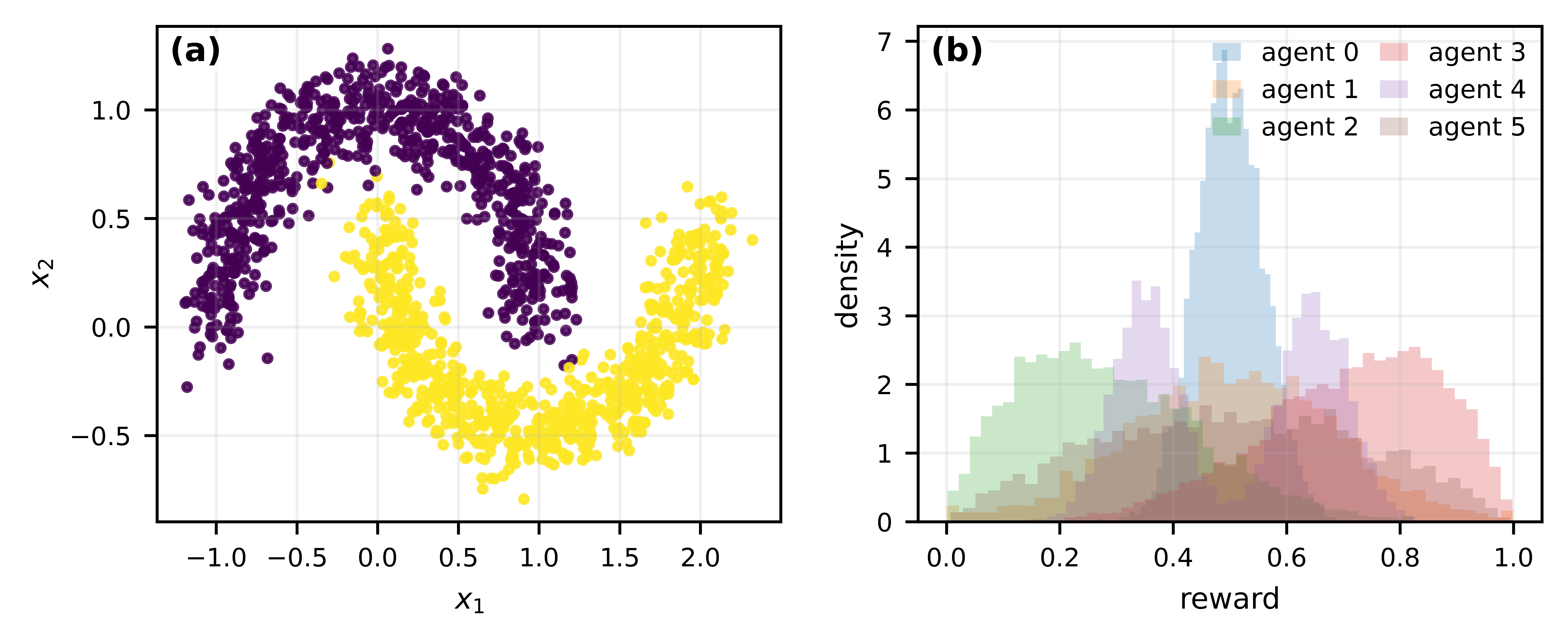}
     \caption{\textbf{IID data generation} 
(a) IID task contexts sampled from a half-moons distribution, illustrating a stationary but structured task space. 
(b) IID reward distributions for $K$ agents, where rewards are independently drawn over time from fixed distributions with a matched mean (approximately 0.5) but heterogeneous higher-order properties (e.g., variance, skewness, and bimodality).}

    \label{fig:iid}
\end{figure}

\begin{figure}[H]
    \centering
    \includegraphics[width=0.8\linewidth]{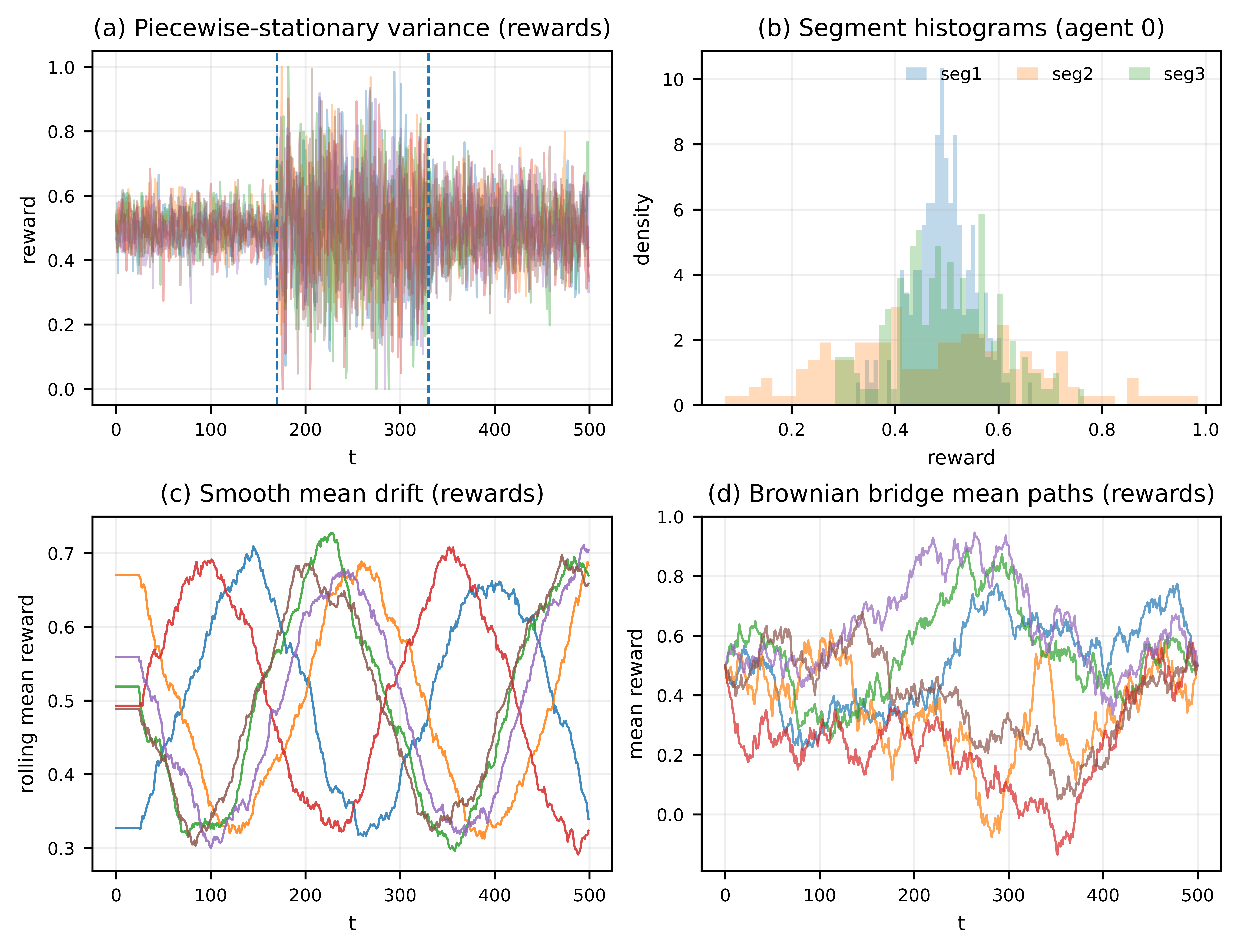}
    \caption{\textbf{Non-IID data generation}
    (a) Piecewise-stationary rewards where the variance changes at unknown changepoints while the mean remains fixed, shown as reward trajectories over time across agents. (b) Segment-wise reward histograms for a representative agent, revealing distributional shifts induced by variance changepoints. (c) Smooth-drift setting where agent reward means evolve gradually according to a sinusoidal drift (shown via rolling mean trajectories), inducing continuous non-stationarity. (d)Brownian-bridge setting illustrating temporally correlated latent mean paths constrained to fixed endpoints, producing structured stochastic dependence across time.}
    \label{fig:non_iid}
\end{figure}

\section{Additional Synthetic Experiments}
\label{add_suntehtic_experiments}

\subsection{Dataset and Task Description}
Datasets and tasks are as in \ref{suntehtic_experiments}.

\subsection{Baselines}
Baselines are as in Section \ref{suntehtic_experiments}.

\subsection{Evaluation Metrics}
%
\label{evaluation_metrics2}
\begin{enumerate}[label={$\bullet$},leftmargin=*,itemsep=1pt,topsep=1pt]
  \item \textbf{Event rate.}
  Under survival-style feedback with censoring indicator $\delta_t(i_t)\in\{0,1\}$, we report $\frac{1}{T}\sum_{t=1}^T \delta_t(i_t)$, i.e., the fraction of rounds with uncensored (fully observed) outcomes. Higher event rates indicate less censoring and more informative feedback.
  \item \textbf{Mean observed time.}
  We report the mean observed time $\frac{1}{T}\sum_{t=1}^T T^{\mathrm{obs}}_t(i_t)$, where\\$T^{\mathrm{obs}}_t(i_t)=\min\{T_t(i_t), C_t(i_t)\}$ under right censoring. This metric summarizes the typical observed completion time under the censoring mechanism.
\end{enumerate}
%
%
%
\begin{table*}[t]
\centering
\scriptsize
\setlength{\tabcolsep}{3pt}
\makebox[\textwidth][c]{
\begin{tabular}{|l|cccc|cccc|}
\hline
& \multicolumn{4}{c|}{\textbf{Event Rate}} & \multicolumn{4}{c|}{\textbf{Mean Observed Time}} \\
\hline
Environment & BOT-Orch & No-OT ($\lambda$=0) & Random & UCB1 (MAB) & BOT-Orch & No-OT ($\lambda$=0) & Random & UCB1 (MAB) \\
\hline
\textbf{IID} & \multicolumn{4}{c|}{} & \multicolumn{4}{c|}{} \\
IID-G & \textbf{0.63$\pm$0.02} & 0.60$\pm$0.02 & 0.58$\pm$0.04 & 0.54$\pm$0.03 & \textbf{0.59$\pm$0.04} & 0.66$\pm$0.04 & 0.72$\pm$0.04 & 0.71$\pm$0.03 \\
IID-M & \textbf{0.65$\pm$0.04} & 0.60$\pm$0.04 & 0.60$\pm$0.03 & 0.59$\pm$0.03 & \textbf{0.58$\pm$0.02} & 0.66$\pm$0.04 & 0.69$\pm$0.03 & 0.70$\pm$0.05 \\
\hline
\textbf{Non-IID} & \multicolumn{4}{c|}{} & \multicolumn{4}{c|}{} \\
NonIID-BB & \textbf{0.67$\pm$0.04} & 0.61$\pm$0.03 & 0.61$\pm$0.02 & 0.62$\pm$0.05 & \textbf{0.55$\pm$0.09} & 0.62$\pm$0.01 & 0.65$\pm$0.04 & 0.63$\pm$0.03 \\
NonIID-PS & \textbf{0.66$\pm$0.04} & 0.64$\pm$0.02 & 0.59$\pm$0.04 & 0.59$\pm$0.02 & \textbf{0.56$\pm$0.04} & 0.62$\pm$0.04 & 0.67$\pm$0.03 & 0.69$\pm$0.04 \\
NonIID-SD & \textbf{0.65$\pm$0.03} & 0.64$\pm$0.03 & 0.60$\pm$0.04 & 0.57$\pm$0.02 & \textbf{0.59$\pm$0.03} & 0.64$\pm$0.05 & 0.67$\pm$0.03 & 0.67$\pm$0.04 \\
\hline
\end{tabular}}
\caption{Event Rate and Mean Observed Time (mean \(\pm\) 95\% CI across 5 seeds; \(T=200\)).}
\label{tab:event_rate_ci_4col}
\end{table*}

\subsection{Results.}
Table~\ref{tab:event_rate_ci_4col} reports event rate and mean observed time. BOT-Orch achieves the highest event rate and lowest mean observed time across all environments, in both IID and non-IID settings, consistently outperforming No-OT, Random, and UCB1.

\subsection{Ablation Study: Sensitivity to the Alignment
Penalty \texorpdfstring{$\lambda$}{lambda}}
\label{sec:ablation_synthetic}

The alignment penalty weight $\lambda$ controls the
trade-off between exploitation of historical reward
estimates $\rhat_t(i)$ and adherence to OT-based
distributional alignment $W_t(i)$.
We conduct a grid search over
$\lambda \in \{0.0,\,0.5,\,1.0,\,1.25,\,1.5,\,1.75,\,
2.0,\,3.0,\,5.0,\,10.0\}$,
running BOT-Orch for 30 seeds under both the IID
(Algorithm~1) and Non-IID (Algorithm~2) conditions.
The No-OT, Random, and UCB1 baselines serve as fixed
reference lines since they are independent of $\lambda$.
Table~\ref{tab:lambda_ablation_synthetic} summarise the results.

\begin{table}[t]
\centering
\scriptsize
\setlength{\tabcolsep}{4pt}
\begin{tabular}{|l|ccc|ccc|}
\hline
& \multicolumn{3}{c|}{IID (Algorithm 1)} & \multicolumn{3}{c|}{Non-IID (Algorithm 2)}\\
\hline
$\lambda$ & Cum. net & Regret & Team acc. & Cum. net & Regret & Team acc.\\
\hline
0 & -311.34{$\pm$3.43} & 122.61{$\pm$2.49} & 0.177$\pm$0.013 & -722.67$\pm$47.12 & 317.61$\pm$21.59 & 0.177$\pm$0.013 \\
0.5 & -271.42$\pm$2.79 & 82.68$\pm$1.92 & 0.336$\pm$0.011 & -549.11$\pm$33.07 & 144.05$\pm$5.85 & 0.418$\pm$0.017 \\
1 & -262.29$\pm$2.69 & 73.56$\pm$1.78 & 0.380$\pm$0.011 & -531.17$\pm$31.69 & 126.11$\pm$4.86 & 0.453$\pm$0.017 \\
1.25 & -259.93$\pm$2.57 & 71.19$\pm$1.78 & 0.386$\pm$0.011 & -528.55$\pm$31.48 & 123.49$\pm$5.23 & 0.459$\pm$0.019 \\
1.5 & -258.52$\pm$2.62 & 69.78$\pm$1.72 & 0.389$\pm$0.011 & -526.24$\pm$31.88 & 121.18$\pm$5.18 & 0.462$\pm$0.018 \\
1.75 & -257.98$\pm$2.27 & 69.24$\pm$1.47 & 0.398$\pm$0.009 & -524.10$\pm$31.77 & 119.04$\pm$5.49 & 0.470$\pm$0.019 \\
2 & -256.20$\pm$2.55 & 67.47$\pm$1.64 & 0.403$\pm$0.010 & -524.22$\pm$31.76 & 119.16$\pm$5.05 & 0.469$\pm$0.018 \\
3 & -253.76$\pm$2.49 & 65.02$\pm$1.60 & 0.412$\pm$0.011 & -519.79$\pm$31.65 & 114.72$\pm$4.79 & 0.479$\pm$0.019 \\
5 & -253.28$\pm$2.04 & 64.54$\pm$1.67 & 0.413$\pm$0.010 & -515.97$\pm$32.03 & 110.91$\pm$5.11 & 0.484$\pm$0.020 \\
10 & -251.99$\pm$2.16 & 63.25$\pm$1.46 & 0.420$\pm$0.009 & -514.23$\pm$31.32 & 109.16$\pm$4.67 & 0.493$\pm$0.018 \\
11 & -252.06$\pm$2.22 & 63.32$\pm$1.47 & 0.420$\pm$0.009 & -513.61$\pm$31.26 & 108.54$\pm$4.71 & 0.495$\pm$0.018 \\
12 & -252.05$\pm$2.28 & 63.32$\pm$1.51 & 0.420$\pm$0.009 & -512.93$\pm$31.33 & 107.86$\pm$4.73 & 0.495$\pm$0.018 \\
13 & \textbf{-251.60$\pm$2.33} & \textbf{62.86$\pm$1.60} & \textbf{0.422$\pm$0.009} & \textbf{-513.01$\pm$31.01} & \textbf{107.95$\pm$5.00} & \textbf{0.494$\pm$0.018} \\
14 & -251.71$\pm$2.42 & 62.97$\pm$1.50 & 0.423$\pm$0.009 & -513.16$\pm$30.99 & 108.10$\pm$5.07 & 0.494$\pm$0.018 \\
15 & -251.80$\pm$2.45 & 63.06$\pm$1.56 & 0.423$\pm$0.008 & -513.45$\pm$31.02 & 108.39$\pm$5.01 & 0.493$\pm$0.018 \\
\hline
\multicolumn{7}{|l|}{\textit{Reference baselines ($\lambda$-independent)}}\\
\hline
No-OT & -311.34$\pm$3.43 & 122.61$\pm$2.49 & 0.177$\pm$0.013 & -722.67$\pm$47.12 & 317.61$\pm$21.59 & 0.177$\pm$0.013 \\
Random & -315.75$\pm$3.74 & 127.02$\pm$3.11 & 0.177$\pm$0.010 & -731.00$\pm$50.00 & 325.93$\pm$22.38 & 0.169$\pm$0.010 \\
UCB1 & -313.22$\pm$3.92 & 124.48$\pm$2.93 & 0.177$\pm$0.012 & -722.86$\pm$48.17 & 317.79$\pm$21.51 & 0.178$\pm$0.009 \\
\hline
\end{tabular}
\caption{$\lambda$ grid search results for BOT-Orch on synthetic tasks. Mean $\pm$ 95\% CI across 30 seeds, $T=114$, evaluated with $\lambda_{eval}=1.0$. $\lambda=0$ reproduces No-OT by construction (sanity check). Selected $\lambda^*=13$ (bold) maximizes average Cum. net across IID/Non-IID panels. Reference baselines at the bottom are $\lambda$-independent.}
\label{tab:lambda_ablation_synthetic}
\end{table}
%
%

\section{Semi-Synthetic Experiment Settings}
\label{sec:app_setting}

\paragraph{Agents.} We set $M = 2$ agents:
\begin{itemize}[label={$\bullet$},leftmargin=*,itemsep=1pt,topsep=1pt]
  \item \textbf{Agent~0 (AI)}: a logistic regression
    classifier (L2, $C{=}1.0$) trained on the training split
    and calibrated using Platt scaling (isotonic regression)
    on the calibration split.
    Accuracy: 98.2\% in-distribution, 80.7\% under shift.
  \item \textbf{Agent~1 (Human)}: a simulated clinical expert
    with complementary accuracy
    88.0\% on in-distribution patients,
    94.7\% on shifted patients.
\end{itemize}
The human expert is more accurate on the patients the AI
handles worst, confirming a positive complementarity gap
$\Delta_{\mathrm{comp}} > 0$.

\paragraph{Tasks and reward.}
At each round $t$, a patient biopsy $x_t \in \X$ arrives.
The reward is binary correctness: $R_t(i) = \mathbf{1}[\text{agent } i
  \text{ classifies patient } t \text{ correctly}]$. This is bounded in $[0,1]$, satisfying Assumption~(A2).
We use bandit feedback throughout and only the chosen agent's
reward is observed.

\paragraph{OT alignment costs.}
We use the output-space OT alignment cost on the binary
label simplex $\{0,1\}$ with ground cost
$$C = \bigl[\begin{smallmatrix}0&1\\1&0\end{smallmatrix}\bigr]$$
(0-1 loss).
Under this cost, the Wasserstein distance between the true
label one-hot $\nu_t = \delta_{y_t}$ and each agent's
predictive distribution admits the closed form:
\begin{align}
  W_t(\text{AI})
  &= 1 - R_t(\text{AI})
  \quad\text{(probability AI is wrong on patient }t\text{)},
  \label{eq:W_ai}\\
  W_t(\text{human})
  &= 1 - p_h(x_t)
  \quad\text{(probability human is wrong on patient }t\text{)},
  \label{eq:W_human}
\end{align}
where $p_h(x_t)$ is the human's accuracy for that patient's
shift status.
This is a direct instantiation of the general alignment cost
$W_c(\nu_t, \mu_i)$ defined in Section~3.4.
The closed form follows from the exactness of Sinkhorn
transport on the $2\times 2$ binary simplex.

\paragraph{OT Dominance verification.}
With these costs, under distribution shift:
\[
  W_t(\text{AI}) \approx 0.193
  \quad\text{vs.}\quad
  W_t(\text{human}) \approx 0.053.
\]
The gap $W_t(\text{AI}) - W_t(\text{human}) = 0.140$
is large and positive on shifted patients, confirming the
precondition of Theorem~4.2 Part~2 that the human has strictly
lower alignment cost on shifted patients and should be
preferred by the BOT-Orch policy.
On in-distribution patients the relationship is reversed
($W_t(\text{AI}) \approx 0.018 < 0.120 = W_t(\text{human})$),
so the AI is correctly preferred there.
This is exactly the complementarity structure that
BOT-Orch is designed to exploit without being told about
the shift.

\section{Additional Semi-Synthetic Experiments and Figures}
\label{additional_results_Med}

\subsection{Diagnostic Analysis}
Figure~\ref{fig:escalation} and ~\ref{fig:agent_selection} provide diagnostic analyses.

\begin{figure}[H]
\centering
\includegraphics[width=0.88\linewidth]{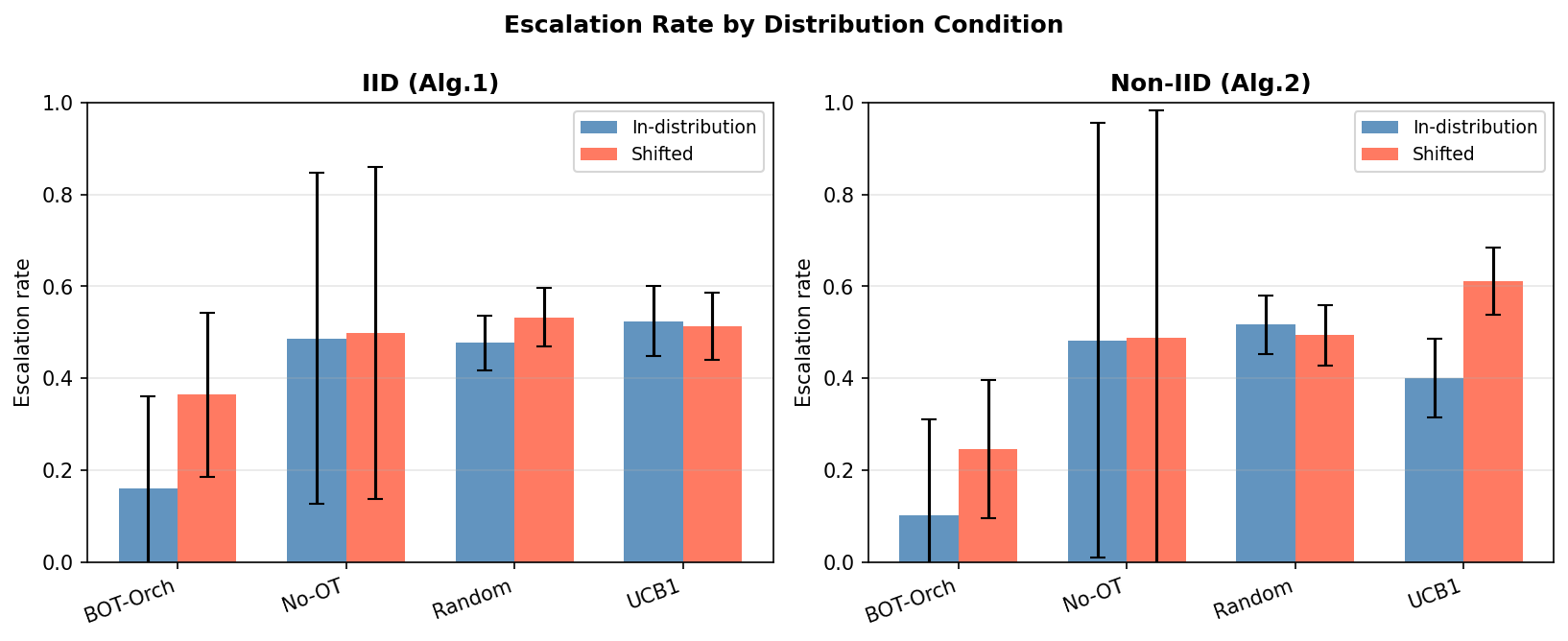}
\caption{%
  \textbf{Escalation rate by distribution condition.}
  Mean escalation rate for in-distribution patients
  (blue) and shifted patients (red) per method.
  Error bars show standard deviation across seeds.
  \textit{Left}: IID condition;
  \textit{right}: Non-IID condition.
  BOT-Orch achieves a higher escalation rate on shifted
  patients relative to in-distribution patients compared
  to all baselines, demonstrating targeted routing.
  No-OT's large error bars reflect bimodal behaviour
  across seeds: some runs converge to always-AI and others
  to always-human, due to the cold-start problem in the
  binary bandit without OT regularisation.}
\label{fig:escalation}
\end{figure}

\begin{figure}[H]
\centering
\includegraphics[width=\linewidth]{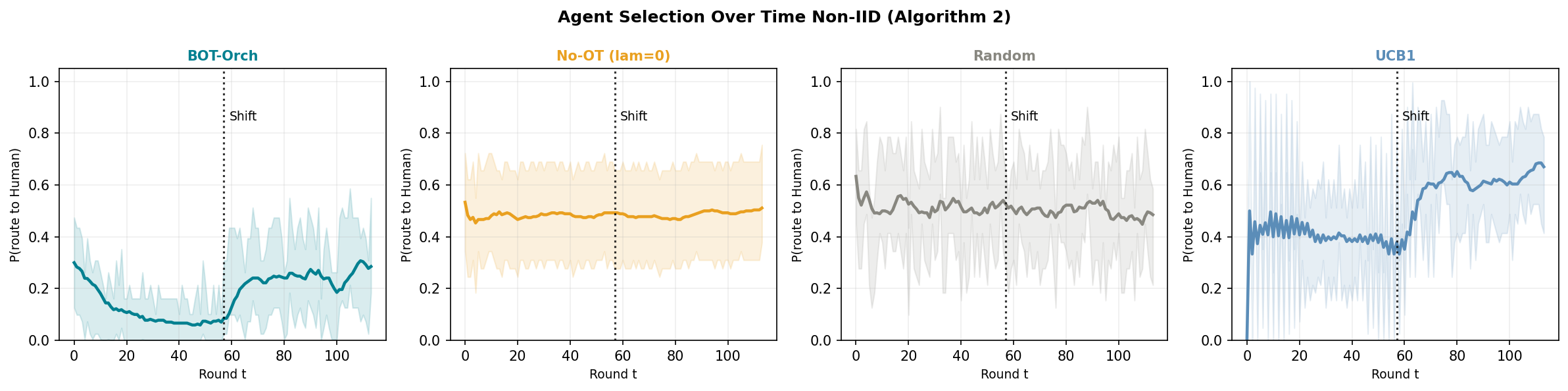}
\caption{%
  \textbf{Agent selection trajectories, Non-IID condition
  (Algorithm~2).}
  Rolling probability of routing to the human expert
  (window $w{=}8$ rounds) for each method over the
  deployment episode.
  The dotted vertical line marks the shift onset at
  round~57.
  BOT-Orch (top-left): routing probability rises after the
  shift, showing adaptation.
  No-OT (top-right): high variance, consistent with the
  bimodal cold-start behaviour observed in
  Table~\ref{tab:combined_results}.
  UCB1 (bottom-right): also adapts but less precisely,
  routing more patients to the human without the
  alignment-cost targeting of BOT-Orch.
  Random (bottom-left): flat at 50\% throughout as
  expected.}
\label{fig:agent_selection}
\end{figure}

\subsection{Escalation Rate and Escalation Rate on Shifted Patients}
We present the escalation rate and escalation rate on shifted patients in Table \ref{tab:combined_results2}
\begin{table*}[ht]
\centering
\scriptsize
\setlength{\tabcolsep}{2pt}
\makebox[\textwidth][c]{
\begin{tabular}{|l|cccc|cccc|}
\hline
& \multicolumn{4}{c|}{\textbf{Escalation Rate}}
& \multicolumn{4}{c|}{\textbf{Escalation Rate on Shifted Patients}} \\
\hline
& BOT-Orch & No-OT ($\lambda{=}0$) & Random & UCB1
& BOT-Orch & No-OT ($\lambda{=}0$) & Random & UCB1 \\
\hline
\textbf{IID}
  & 0.214{$\pm$0.066}
  & 0.493{$\pm$0.360}
  & 0.505{$\pm$0.046}
  & 0.519{$\pm$0.061}
  & 0.283{$\pm$0.085}
  & 0.499{$\pm$0.362}
  & 0.533{$\pm$0.064}
  & 0.514{$\pm$0.073} \\
\textbf{Non-IID}
  & 0.192{$\pm$0.023}
  & 0.485{$\pm$0.484}
  & 0.505{$\pm$0.046}
  & 0.506{$\pm$0.062}
  & 0.209{$\pm$0.019}
  & 0.488{$\pm$0.495}
  & 0.494{$\pm$0.065}
  & 0.612{$\pm$0.074} \\
\hline
\end{tabular}
}
\caption{%
  Deployment metrics across all four methods and both experimental
  conditions. Mean $\pm$ 95\% CI across 30 seeds, $T{=}114$,
  $\lambda{=}3.0$.
  \emph{Esc.\ rate}: fraction of patients routed to human.
  \emph{Esc.(shift)}: escalation rate on shifted patients only.
  IID condition uses Algorithm~1; Non-IID uses Algorithm~2
  (ID patients rounds 1--57, shifted rounds 58--114).}
\label{tab:combined_results2}
\end{table*}

\subsection{Ablation Study: Sensitivity to the Alignment
Penalty \texorpdfstring{$\lambda$}{lambda}}
\label{sec:ablation}

The alignment penalty weight $\lambda$ controls the
trade-off between exploitation of historical reward
estimates $\rhat_t(i)$ and adherence to OT-based
distributional alignment $W_t(i)$.
We conduct a grid search over
$\lambda \in \{0.0,\,0.5,\,1.0,\,1.25,\,1.5,\,1.75,\,
2.0,\,3.0,\,5.0,\,10.0\}$,
running BOT-Orch for 30 seeds under both the IID
(Algorithm~1) and Non-IID (Algorithm~2) conditions.
The No-OT, Random, and UCB1 baselines serve as fixed
reference lines since they are independent of $\lambda$.
Table~\ref{tab:ablation} and
Figure~\ref{fig:lambda_ablation} summarise the results.

\begin{table}[H]
\centering
\scriptsize
\setlength{\tabcolsep}{4pt}
\begin{tabular}{|l|ccc|ccc|}
\hline
& \multicolumn{3}{c|}{\textbf{IID (Algorithm~1)}}
& \multicolumn{3}{c|}{\textbf{Non-IID (Algorithm~2)}} \\
\hline
$\lambda$
  & Cum.\ net & Regret & Team acc.
  & Cum.\ net & Regret & Team acc. \\
\hline
0.0
  & 103.37{$\pm$2.51} & 9.80{$\pm$1.61} & 0.907{$\pm$0.022}
  & 103.17{$\pm$1.80} & 10.14{$\pm$1.17} & 0.905{$\pm$0.016} \\
0.5
  & 102.83{$\pm$3.92} & 10.06{$\pm$3.50} & 0.935{$\pm$0.026}
  & 98.17{$\pm$2.73}  & 14.92{$\pm$2.06} & 0.908{$\pm$0.019} \\
1.0
  & 105.65{$\pm$5.57} & 7.03{$\pm$5.13}  & 0.964{$\pm$0.028}
  & 98.02{$\pm$6.26}  & 14.61{$\pm$6.10} & 0.930{$\pm$0.029} \\
1.25
  & 106.17{$\pm$6.04} & 6.33{$\pm$5.60}  & 0.970{$\pm$0.027}
  & 102.47{$\pm$6.83} & 9.98{$\pm$6.69}  & 0.955{$\pm$0.027} \\
1.5
  & 106.90{$\pm$5.37} & 5.29{$\pm$5.08}  & 0.975{$\pm$0.022}
  & 106.04{$\pm$6.21} & 5.91{$\pm$6.18}  & 0.970{$\pm$0.025} \\
1.75
  & 107.18{$\pm$5.74} & 4.96{$\pm$5.61}  & 0.979{$\pm$0.021}
  & 108.04{$\pm$5.57} & 4.08{$\pm$5.50}  & 0.981{$\pm$0.018} \\
2.0
  & 107.27{$\pm$5.95} & 4.48{$\pm$5.65}  & 0.980{$\pm$0.021}
  & 109.19{$\pm$5.53} & 2.71{$\pm$5.48}  & 0.986{$\pm$0.017} \\
\best{3.0}
  & \best{108.84{$\pm$2.22}} & \best{2.29{$\pm$2.11}} & \best{0.988{$\pm$0.010}}
  & \best{110.61{$\pm$1.03}} & \best{0.59{$\pm$0.93}} & \best{0.993{$\pm$0.007}} \\
5.0
  & 108.88{$\pm$1.25} & 1.01{$\pm$1.04}  & 0.993{$\pm$0.006}
  & 109.61{$\pm$0.99} & 0.37{$\pm$0.67}  & 0.995{$\pm$0.006} \\
10.0
  & 105.90{$\pm$1.01} & 0.26{$\pm$0.38}  & 0.993{$\pm$0.008}
  & 106.34{$\pm$0.73} & 0.10{$\pm$0.29}  & 0.995{$\pm$0.006} \\
\hline
\multicolumn{7}{|l|}{\textit{Reference baselines ($\lambda$-independent)}} \\
\hline
No-OT
  & 103.37{$\pm$2.51} & 9.80{$\pm$1.61}  & 0.907{$\pm$0.022}
  & 103.17{$\pm$1.80} & 10.14{$\pm$1.17} & 0.905{$\pm$0.016} \\
Random
  & 83.98{$\pm$5.92}  & 28.72{$\pm$6.03} & 0.917{$\pm$0.020}
  & 79.78{$\pm$5.52}  & 32.97{$\pm$4.87} & 0.905{$\pm$0.024} \\
UCB1
  & 80.55{$\pm$4.88}  & 31.31{$\pm$3.98} & 0.902{$\pm$0.026}
  & 85.82{$\pm$4.91}  & 26.26{$\pm$4.28} & 0.919{$\pm$0.025} \\
\hline
\end{tabular}
\vspace{6pt}
\caption{%
  \textbf{$\lambda$ grid search results for BOT-Orch.}
  Mean $\pm$ 95\% CI across 30 seeds, $T{=}114$.
  $\lambda{=}0$ reproduces No-OT exactly (sanity check:
  difference $= 0.000$).
  Optimal value $\lambda^*{=}3.0$ in \best{bold}.
  Reference baselines at the bottom are $\lambda$-independent.}
\label{tab:ablation}
\end{table}

\paragraph{Sensitivity and Optimal $\lambda$.}
In the IID condition (Figure~\ref{fig:lambda_ablation},
top row), cumulative net utility increases monotonically
from $\lambda{=}0$ through $\lambda{=}5.0$ before
declining at $\lambda{=}10.0$.
The difference between $\lambda{=}3.0$
($108.84\pm2.22$) and $\lambda{=}5.0$ ($108.88\pm1.25$)
is $0.04$ units --- well within one standard deviation ---
and the two values are statistically indistinguishable.

\paragraph{Phase Transition in Non-IID Settings.}
A qualitatively different pattern emerges in the Non-IID
condition (Figure~\ref{fig:lambda_ablation}, bottom row).
For $\lambda\in\{0.5,\,1.0,\,1.25\}$, BOT-Orch performs
\emph{worse} than No-OT, with net utility falling to as
low as $98.02$ at $\lambda{=}1.0$ against the No-OT
baseline of $103.17$ (red-shaded cells in
Table~\ref{tab:ablation}).
This counterintuitive dip arises because any positive
$\lambda$ penalises the human agent during the first 57
in-distribution rounds (where
$W_t(\text{human}){=}0.120 > W_t(\text{AI}){\approx}0$),
suppressing its EMA reward estimate through disuse.
When the distribution shifts at round~58, a small
$\lambda$ is insufficient to override the depressed human
reward history, and routing fails to redirect to the
human.
Performance recovers sharply above
$\lambda{\approx}1.5$, where the OT signal becomes strong
enough to dominate stale reward estimates immediately when
the shift arrives, and improves monotonically to the peak
at $\lambda^*{=}3.0$.

\paragraph{Degeneracy at Large $\lambda$.}
For $\lambda\geq5.0$, the OT penalty dominates the reward
signal entirely.
The policy degenerates toward a near-deterministic OT
routing rule, effectively ignoring learned reward history.
Two symptoms confirm this regime: (i) the standard
deviation of net utility collapses from $\pm5.53$ at
$\lambda{=}2.0$ to $\pm0.73$ at $\lambda{=}10.0$ in
Non-IID --- not because the policy is more stable but
because it is no longer exploring; and (ii) net utility
falls at $\lambda{=}10.0$ ($105.90$ IID, $106.34$
Non-IID) below the optimum at $\lambda^*{=}3.0$.

\paragraph{Identification of $\lambda^*{=}3.0$.}
We identify $\lambda^*{=}3.0$ as the joint optimum across
both conditions on the primary metric.
It achieves the peak net utility in the Non-IID condition
($110.61\pm1.03$), is statistically tied with
$\lambda{=}5.0$ in IID (gap $0.04 <$ pooled std $1.73$),
reduces variance by $3\times$ relative to $\lambda{=}2.0$,
and is the last value at which bandit learning and OT
alignment both contribute meaningfully.
The gain over the initial engineering value of
$\lambda{=}2.0$ is $+1.42$ net utility in Non-IID,
confirming that the grid search yields a meaningful
improvement rather than a marginal one.

\begin{figure}[H]
\centering
\includegraphics[width=\linewidth]{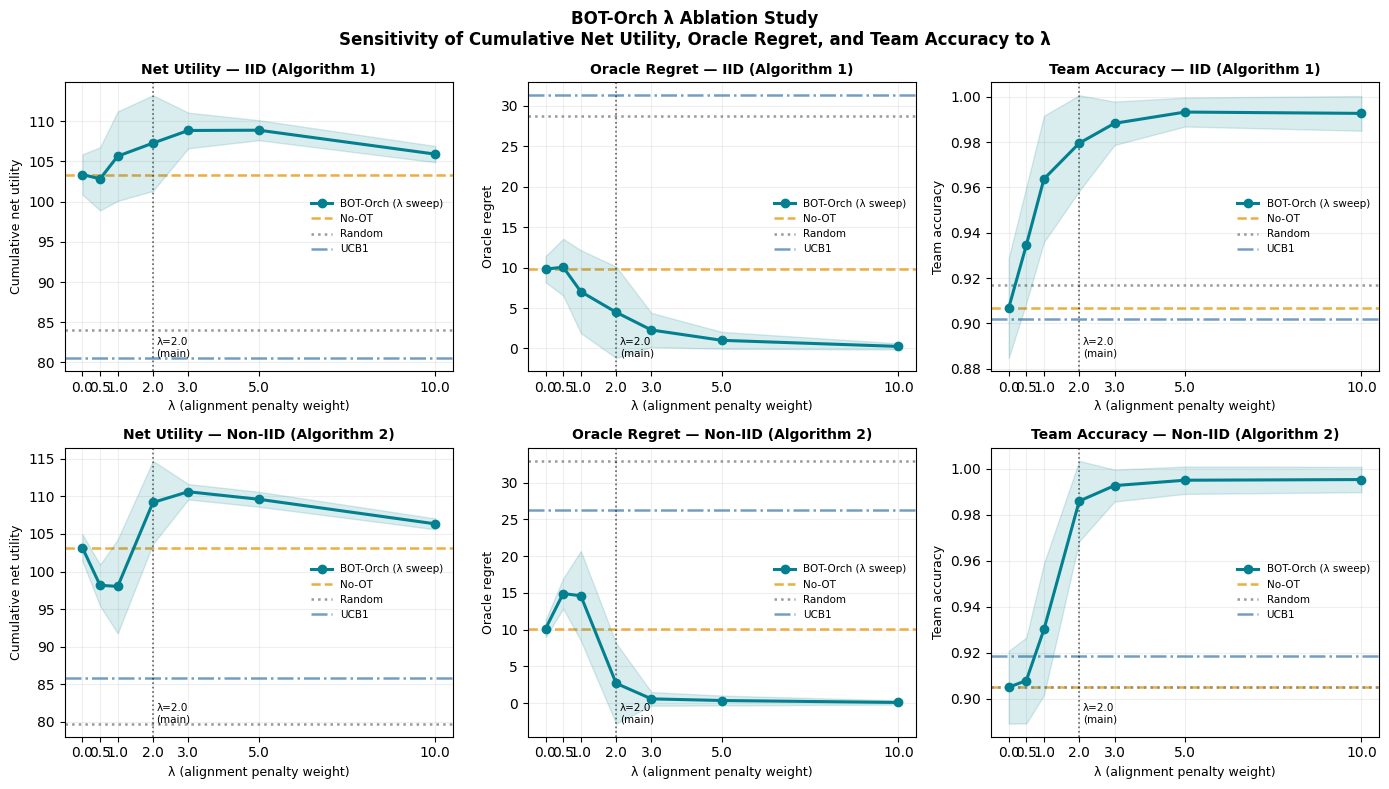}
\caption{%
  \textbf{$\lambda$ sensitivity curves.}
  Cumulative net utility (\textit{left}), oracle regret
  (\textit{centre}), and team accuracy (\textit{right})
  as functions of $\lambda$ for BOT-Orch (teal line with
  $\pm$1\,std shading).
  Horizontal dashed lines show the No-OT, Random, and
  UCB1 baselines.
  The dotted vertical line marks $\lambda{=}2.0$
  (initial value); $\lambda^*{=}3.0$ (grid-search
  optimum) is identified by the peak.
  \textit{Top row}: IID (Algorithm~1);
  \textit{bottom row}: Non-IID (Algorithm~2).
  In the Non-IID panels, performance at
  $\lambda\in\{0.5,1.0,1.25\}$ falls below the No-OT
  baseline, revealing the phase transition.}
\label{fig:lambda_ablation}
\end{figure}

\begin{figure}[H]
\centering
\includegraphics[width=0.88\linewidth]{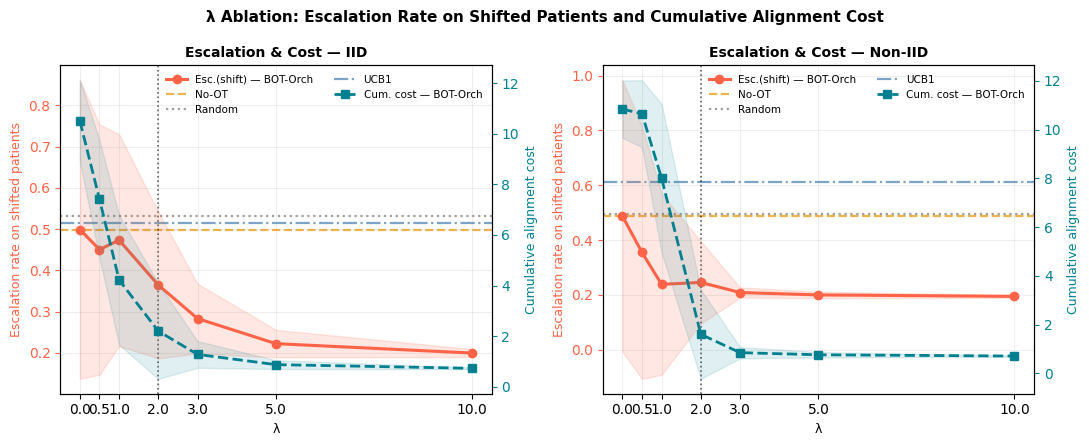}
\caption{%
  \textbf{Escalation rate on shifted patients (left axis,
  red) and cumulative alignment cost (right axis, teal)
  as functions of $\lambda$.}
  Horizontal dashed lines show No-OT, Random, and UCB1
  escalation rates.
  Alignment cost decreases monotonically with $\lambda$,
  confirming Theorem~4.2 Part~1.
  \textit{Left}: IID condition;
  \textit{right}: Non-IID condition.}
\label{fig:lambda_esc_cost}
\end{figure}

\end{document}